\def\eqref#1{equation~\ref{#1}}
\def\1{\bm{1}}
\newcommand{\train}{\mathcal{D}}
\def\mC{{\bm{C}}}
\DeclareMathAlphabet{\mathsfit}{\encodingdefault}{\sfdefault}{m}{sl}
\SetMathAlphabet{\mathsfit}{bold}{\encodingdefault}{\sfdefault}{bx}{n}
\def\gC{{\mathcal{C}}}
\def\gN{{\mathcal{N}}}
\newcommand{\poisson}{\mathrm{Pois}}
\newcommand{\E}{\mathbb{E}}
\newcommand{\R}{\mathbb{R}}
\definecolor{lightcyan}{rgb}{0.88,1,1}
\definecolor{lightblue}{rgb}{0.44, 0.65, 0.82}
\definecolor{darkgreen}{rgb}{0.0, 0.5, 0.0}
\definecolor{lightred}{RGB}{255,200,200}
\definecolor{lightgreen}{rgb}{0.56, 0.93, 0.56}
\definecolor{tangerine}{RGB}{255, 97, 56}
\definecolor{lightpurple}{RGB}{191, 148, 228}
\definecolor{superlightgray}{gray}{0.90}
\newcommand{\OurModel}{Neural MJD\xspace}
\crefname{appendix}{App.}{Apps.}
\crefname{section}{Sec.}{Secs.}
\crefname{figure}{Fig.}{Figs.}
\crefname{table}{Tab.}{Tabs.}
\crefname{algorithm}{Alg.}{Algs.}
\theoremstyle{plain}
\newtheorem{theorem}{Theorem}[section]
\newtheorem{lemma}[theorem]{Lemma}
\theoremstyle{definition}
\newtheorem{definition}[theorem]{Definition}
\theoremstyle{remark}
\DeclareRobustCommand\onedot{\futurelet\@let@token\@onedot}
\def\@onedot{\ifx\@let@token.\else.\null\fi\xspace}
\def\eg{\emph{e.g}\onedot} 
\def\ie{\emph{i.e}\onedot}
\title{\OurModel: Neural Non-Stationary Merton Jump Diffusion for Time Series Prediction}
\author{%
  Yuanpei Gao${}^{1, 2}$ \\
  \texttt{yuanpeig@student.ubc.ca} \\
  \And
  Qi Yan${}^{1, 2}$ \\
  \texttt{qi.yan@ece.ubc.ca} \\
  \AND
  Yan Leng${}^{3}$ \\
  \texttt{yan.leng@mccombs.utexas.edu} \\
  \And
  Renjie Liao${}^{1, 2, 4}$ \\
  \texttt{rjliao@ece.ubc.ca}
}
\begin{document}
\maketitle
\begin{center}
  ${}^1$University of British Columbia; ${}^2$Vector Institute
  \vspace{0.5em}
  \\
  ${}^3$University of Texas at Austin; ${}^4$Canada CIFAR AI Chair
\end{center}
\vspace{1em}

\begin{abstract}
While deep learning methods have achieved strong performance in time series prediction, their black-box nature and inability to explicitly model underlying stochastic processes often limit their robustness handling non-stationary data, especially in the presence of abrupt changes.
In this work, we introduce \textit{\OurModel}, a neural network based non-stationary Merton jump diffusion (MJD) model.
Our model explicitly formulates forecasting as a stochastic differential equation (SDE) simulation problem, combining a time-inhomogeneous It\^o diffusion to capture non-stationary stochastic dynamics with a time-inhomogeneous compound Poisson process to model abrupt jumps.
To enable tractable learning, we introduce a likelihood truncation mechanism that caps the number of jumps within small time intervals and provide a theoretical error bound for this approximation.
Additionally, we propose an Euler-Maruyama with restart solver, which achieves a provably lower error bound in estimating expected states and reduced variance compared to the standard solver.
Experiments on both synthetic and real-world datasets demonstrate that \OurModel consistently outperforms state-of-the-art deep learning and statistical learning methods. 
Our code is available at \url{https://github.com/DSL-Lab/neural-MJD}.
\end{abstract}

\section{Introduction}\label{sect:intro}

Real-world time series often exhibit a mix of continuous trends and abrupt changes (jumps)~\cite{aminikhanghahi2017survey, short2013improved}. 
For example, stock prices generally follow steady patterns driven by macroeconomic factors but can experience sudden jumps due to unexpected news or policy shifts~\cite{merton1976option,andersen2007roughing}. 
Similarly, retail revenue may rise seasonally but jump abruptly due to sales promotions or supply chain disruptions~\cite{wilson2007impact,van2004decomposing}. 
These discontinuous changes pose significant challenges for temporal dynamics modeling.

Classical statistical models, \eg, Merton jump diffusion (MJD)~\cite{merton1976option} or more general L\'{e}vy processes~\cite{ken1999levy}, provide a principled approach for modeling such data with jumps.
They are effective for small datasets with well-understood statistical properties~\cite{brown2004smoothing,anderson2011statistical,ariyo2014stock}.
However, their assumptions---such as independent and stationary increments---often fail in real-world non-stationary settings.
Additionally, these models struggle to capture interdependencies across multiple time series, such as competition effects among colocated businesses~\cite{glaeser2010agglomeration} or spillover dynamics in stock markets driven by investor attention shifts~\cite{lera2024modeling,drake2017comovement}.
As a result, they are difficult to scale effectively to large datasets.
\clearpage 
In contrast, deep learning approaches have demonstrated strong empirical performance by effectively learning time-varying patterns from data~\cite{lim2021time,sezer2020financial,han2019review,lara2021experimental,benidis2022deep,mahmoud2021survey}.
Despite their success, these models are often black-box in nature and lack explicit mathematical formulations to describe the underlying dynamics. 
This limits their interpretability and often results in poor generalization to non-stationary data with jumps. Notably, in previous deep learning time-series studies, ``non-stationarity'' typically refers to distributional shifts in the data over time. These works focus on mitigating such shifts using techniques like input-level normalization (\eg, DAIN~\cite{passalis2019deep}, ST-norm~\cite{deng2021st}, RevIN~\cite{kim2021reversible}), or domain adaptation (\eg, DDG-DA~\cite{li2022ddg}). In contrast, our notion of ``non-stationarity'' centers on modeling a MJD process with parameters that evolve over time.

To address these limitations, we propose \textit{\OurModel}, a neural parameterization of the non-stationary Merton jump diffusion model that combines the advantages of statistical and learning-based approaches.
In particular, our contributions are as follows:
\begin{itemize}
\setlength\itemsep{-1pt}
\item Our \textit{\OurModel} integrates a time-inhomogeneous Itô diffusion to capture non-stationary stochastic dynamics and a time-inhomogeneous compound Poisson process to model abrupt jumps. The parameters of the corresponding SDEs are predicted by a neural network conditioned on past data and contextual information.
\item To enable tractable learning, we present a likelihood truncation mechanism that caps the number of jumps within small time intervals and provide a theoretical error bound for this approximation. Additionally, we propose an \textit{Euler-Maruyama with restart} solver for inference, which achieves a provably lower error bound in estimating expected states and reduced variance compared to the standard solver. 
\item Extensive experiments on both synthetic and real-world datasets show that our model consistently outperforms deep learning and statistical baselines under both stochastic and deterministic evaluation protocols.
\end{itemize}

\section{Related work}\label{sect:related_work} 

Many neural sequence models have been explored for time series prediction, \eg, long short-term memory (LSTM)~\cite{hochreiter1997long}, transformers~\cite{vaswani2017attention}, and state space models (SSMs)~\cite{gu2021efficiently}. 
These models~\cite{zhou2021informer,zhou2022fedformer,zeng2023transformers,siami2019performance,livieris2020cnn,li2019enhancing, kidger2020neural, morrill2021neural,zhang2023constitutes,yan2023autocast++} have shown success across domains, including industrial production~\cite{sagheer2019time}, disease prevention~\cite{zeroual2020deep, chimmula2020time}, and financial forecasting~\cite{cao2019financial,livieris2020cnn}. 
To handle more contextual information, extensions incorporating spatial context have been proposed, \eg, spatio-temporal graph convolutional networks (STGCN)~\cite{yu2017spatio}, diffusion convolutional recurrent neural networks (DCRNN)~\cite{li2017diffusion}, and graph message passing networks (GMSDR)~\cite{liu2022msdr}. 
However, these models remain fundamentally deterministic and do not explicitly model stochastic temporal dynamics.
Generative models, \eg, deep auto-regressive models~\cite{salinas2020deepar} and diffusion/flow matching models~\cite{ho2020denoising,song2020score,lipman2022flow}, provide probabilistic modeling of the time series and generate diverse future scenarios~\cite{nguyen2021temporal, rasul2020multivariate,rasul2021autoregressive, yoon2019time,tashiro2021csdi,alcaraz2022diffusion}. 
However, these models often face computational challenges, as either sampling or computing the likelihood can be expensive. 
Additionally, they do not explicitly model abrupt jumps, limiting their generalization ability to scenarios with discontinuities.

Another line of research integrates classical mathematical models, such as ordinary and stochastic differential equations (ODEs and SDEs), into deep learning frameworks~\cite{kidger2020neural,chen2018neural,chowdhury2020predicting,qiao2024enhancing,zuo2020transformer,seifner2023neural, li2020scalable}.
In financial modeling, physics-informed neural networks (PINNs)~\cite{raissi2017machine} have been explored to incorporate hand-crafted Black-Scholes (BS) and MJD models as guidance to construct additional loss functions~\cite{bai2022application, sun2024jump}. 
However, these approaches differ from ours, as we directly parameterize the non-stationary MJD model using neural networks rather than imposing predefined model structures as constraints.
Neural jump diffusion models have also been explored in the context of temporal point processes (TPPs), such as Hawkes and Poisson processes~\cite{jia2019neural, zhang2024neural}. 
However, these methods primarily focus on event-based modeling, where jumps are treated as discrete occurrences of events, thus requiring annotated jump labels during training. 
In contrast, our approach aims to predict time series values at any give time, irrespective of whether a jump occurs, without relying on labeled jump events.
Moreover, since jump events are unknown in our setting, our likelihood computation is more challenging since it requires summing over all possible number of jumps.

Finally, various extensions of traditional MJD have been proposed in financial mathematics to handle non-stationary data~\cite{tankov2003financial, andersen2000jump}, such as the stochastic volatility jump (SVJ) model~\cite{bates1996jumps}, affine jump models~\cite{duffie2001risk}, and the Kou jump diffusion model~\cite{kou2002jump}. 
However, these models rely on strong assumptions for analytical tractability, requiring manual design of parameter evolution and often being computationally expensive~\cite{ait2007maximum}. 
For example, the SVJ model combines Heston’s stochastic volatility with MJD under the assumption that volatility follows the Cox-Ingersoll-Ross (CIR) process, meaning it reverts to a fixed long-term mean. 
Despite this, it lacks a closed-form likelihood function. Moreover, variants with time-dependent parameters require calibrations on market data to obtain the functions of parameters~\cite{HSRMTimeDependentBS2015}.
In contrast, our model directly learns the parameters of the non-stationary MJD from data, which is similar to the classical SDE calibration for financial modeling, but provides better expressiveness and flexibility while still permitting closed-form likelihood evaluation.

\section{Background}\label{sect:background}

To better explain our method, we first introduce two prominent models in mathematical finance, the Black-Scholes model and the Merton jump diffusion model.

\noindent\textbf{Black-Scholes (BS) model.}
The Black-Scholes model was developed by Fischer Black and Myron Scholes, assuming that asset prices follow a continuous stochastic process~\cite{black1973pricing}. Specifically, the dynamics of asset price $S_t$ at time $t$ is described by the following SDE:
\begin{align}\label{eq:BS_SDE}
    \mathrm{d} S_t = S_t(\mu \mathrm{d} t + \sigma \mathrm{d} W_t),
\end{align}
where $\mu$ is the drift rate, representing the expected return per unit of time, and $\sigma$ is the volatility, indicating the magnitude of fluctuations in the asset price. $W_t$ refers to a standard Wiener process. 

\noindent\textbf{Merton jump diffusion (MJD) model.}
To account for discontinuities in asset price dynamics, Robert C. Merton extended the BS model by introducing the MJD model~\cite{merton1976option}. 
This model incorporates an additional jump process that captures sudden and significant changes in asset prices, which cannot be explained by continuous stochastic processes alone.

The dynamics of the asset price $S_t$ in the MJD model are described by the following SDE:
\begin{align}\label{eq:MJD_SDE}
    \mathrm{d} S_t = S_t((\mu- \lambda k) \mathrm{d} t + \sigma  \mathrm{d} W_t + \mathrm{d} Q_t),
\end{align}
where $Q_t$ follows a compound Poisson process and captures the jump part.
Specifically, $Q_t = \sum^{N_t}_{i=1} (Y_i - 1)$, where $Y_i$ is the price ratio caused by the $i$-th jump event occurring at the time $t_i$, \ie, $Y_i = {S_{t_i}}/{S_{t_i^-}}$ and $N_t$ is the total number of jumps up to time $t$.
$S_{t_i}$ and $S_{t_i^-}$ are the prices after and before the jump at time $t_i$, respectively.
$Y_i$ captures the relative price jump size since $\mathrm{d} S_{t_i} / S_{t_i} = (S_{t_i} - S_{t_i^-}) / S_{t_i^-} = Y_i - 1$.
The price ratio $Y_i$ follows a log-normal distribution, \ie, $\ln Y_i \sim \gN(\nu, \gamma^2)$, where $\nu$ and $\gamma^2$ are the mean and the variance respectively. 
$N_t$ denotes the number of jumps that occur up to time $t$ and follows a Poisson process with intensity $\lambda$, which is the expected number of jumps per unit time.
To make the expected relative price change $\mathbb{E}[\mathrm{d} S_{t_i} / S_{t_i}]$ remain the same as in the BS model in~\cref{eq:BS_SDE}, MJD introduces an additional adjustment in the drift term of the diffusion, \ie, $-\lambda k \mathrm{d} t$ in~\cref{eq:MJD_SDE}.
In particular, we have, 
\begin{align}
    \mathbb{E}[\mathrm{d} Q_t] = \mathbb{E}[(Y_{N_t} - 1) \mathrm{d} N_t] = \mathbb{E}[Y_{N_t} - 1] \mathbb{E}[\mathrm{d} N_t], \nonumber
\end{align}
where we use the assumption of MJD that ``how much it jumps" (captured by $Y_{N_t}$) and ``when it jumps" (captured by $N_t$) are independent. 
For the log-normal distributed $Y_{N_t}$, we can compute the expected jump magnitude $\mathbb{E}[Y_{N_t} - 1] = \exp{(\nu + {\gamma^2}/{2})} - 1$.
To simplify the notation, we define $k \coloneqq \mathbb{E}[Y_{N_t} - 1]$.
For the Poisson process $N_t$, we have $\mathbb{E}[\mathrm{d} N_t] = \lambda \mathrm{d} t$. 
Therefore, we have $\mathbb{E}[\mathrm{d} Q_t] = \lambda k \mathrm{d} t$, which justifies the adjustment term $-\lambda k \mathrm{d} t$ in~\cref{eq:MJD_SDE}.

The MJD model has an explicit solution for its asset price dynamics, given by:
\begin{align}\label{eq:MJD_solution}
    \ln \frac{S_t}{S_0} = \left(\mu-\lambda  k- \frac{\sigma^2}{2}\right)t + \sigma W_t+ \sum^{N_t}_{i=1}\ln Y_i.
\end{align}
Based on this solution, the conditional probability of the log-return at time $t$, given the initial price $S_0$ and the number of jumps $N_t = n$, can be derived as:
\begin{align}\label{eq:MJD_cond_prob}
    P\left(\ln S_t \vert S_0, N_t = n\right) = \gN\left(a_n, b_n^2\right),
\end{align}
where $a_n = \ln S_0 + \left(\mu -\lambda k -\frac{\sigma^2}{2}\right)t + n\nu$ and $b_n^2 = \sigma^2 t +\gamma^2n$. 
Therefore, we obtain the likelihood,
{
\small
\begin{align}\label{eq:MJD_likelihood}
    P\left(\ln S_t \vert S_0\right) = \sum_{n=0}^{\infty} P\left(N_t = n\right)  P\left(\ln S_t\vert S_0, N_t = n\right) =\sum_{n=0}^{\infty} \frac{(\lambda t)^n}{n!} \frac{1}{\sqrt{2\pi b_n^{2}}}\exp \left(-\frac{(\ln S_t-a_n)^2}{2 b_n^{2}}\right).
\end{align}}%
Here we use the fact that $P(N_t = n)$ follows a Poisson distribution.
One can then perform the maximum likelihood estimation (MLE) to learn the parameters $\{\mu, \sigma, \lambda, \nu, \gamma \}$.
Additionally, the conditional expectation has a closed-form,
\begin{align}
    \mathbb{E}\left[S_t \vert S_0\right] &= S_0 \exp(\mu t).
\end{align}
The derivations of the above formulas are provided in~\cref{app:MJD_derivation}.

\begin{figure*}[t]
    \centering
    \includegraphics[width=0.95\linewidth]{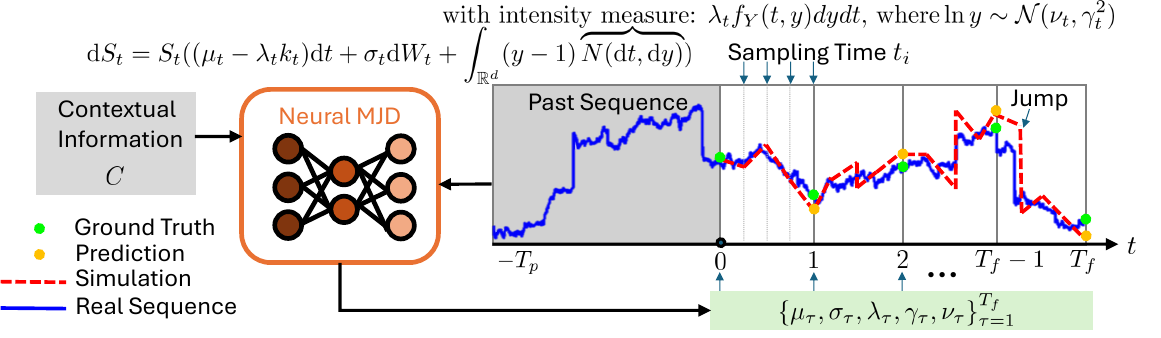}
    \caption{The overview of \OurModel.
    Our model captures discontinuous jumps in time-series data and uncovers the underlying non-stationary SDEs from historical sequences and context information. Our method enables numerical simulations for future forecasting along time.
    }
    \label{fig:main}
\end{figure*}

\section{Methods}\label{sect:model}
We consider the general time series prediction problem where, given the observed past data $\{S_0, S_{-1}, \dots, S_{-T_p}\}$, and optional contextual information (\eg, additional features) $C$, the goal is to predict the future values $\{S_{1}, \dots, S_{T_f}\}$.
Here $T_p$ and $T_f$ denote the past and future time horizons, respectively.
While we use integer indices for denoting time, sampling time is not restricted to integers. 
Our model is built upon a diffusion process, which is inherently continuous in time and compatible with arbitrary sampling mechanism.
An overview of our method is shown in~\cref{fig:main}.

\subsection{Neural Non-Stationary Merton Jump Diffusion}
In the vanilla MJD model, the increments from the Wiener and Compound Poisson processes are independent and stationary, \ie, $\sigma \mathrm{d} W_t \overset{\mathrm{iid}}{\sim} \gN(0, \sigma^2 \mathrm{d}t)$, $\mathrm{d} N_t \overset{\mathrm{iid}}{\sim} \poisson(\lambda \mathrm{d}t)$, and $\ln Y_i \overset{\mathrm{iid}}{\sim} \gN(\nu, \gamma^2)$.
The stationary assumption may be too strong in the real-world applications.
For example, the stock prices of certain companies, such as Nvidia, experienced significantly larger jumps in the past decade compared to the previous one.
Specifically, we allow independent but non-stationary increments in MJD by introducing time-inhomogeneous parameters 
$\{\mu_t, \sigma_t, \lambda_t, \gamma_t, \nu_t\}_{0\leq t \leq T_f}$
in the original SDE.
Thus, at any future time $t$, the modified SDE follows
\begin{align}
    \mathrm{d} S_t &= S_t\left((\mu_{t}- \lambda_{t} k_{t}) \mathrm{d} t + \sigma_{t}  \mathrm{d} W_t + \int_{\R^d} (y-1) N(\mathrm{d}t, \mathrm{d}y)\right).
    \label{eq:NSMJD_sde_exact}
\end{align}
Here \(\lambda_t \in \mathbb{R}_{+}\) and \(\sigma_t, \mu_t, k_t \in \mathbb{R}^d\), while $W_t$ denotes a $d$-dimensional standard Wiener process. 
With a slight abuse of notation, \( \sigma_{t}  \mathrm{d} W_t\) means element-wise product between two size-$d$ vectors.
\( N(\mathrm{d}t, \mathrm{d}y) \) is a \textit{Poisson random measure} on \( [0, T_f] \times \mathbb{R}^d \), which encodes both the timing and magnitude of jumps. 
Intuitively, {a Poisson random measure extends the idea of a Poisson process to random events distributed over both time and magnitude spaces, and} \( N(\mathrm{d}t, \mathrm{d}y) \) counts the number of jumps occurring within the infinitesimal time interval \( [t, t+\mathrm{d}t] \) whose sizes fall within \( [y, y+\mathrm{d}y] \).
The jump component 
\(
S_t\int_{\mathbb{R}^d} (y - 1) N(\mathrm{d}t, \mathrm{d}y)
\)
introduces abrupt discontinuities in the process, accounting for sudden shifts in data.

The statistical properties of the Poisson random measure are uniquely determined by its intensity measure \( \lambda_t f_{Y}(t, y)\mathrm{d}y\mathrm{d}t \).
In our model, the intensity measure controls time-inhomogeneous jump magnitudes and frequencies. 
Namely, jump times follow a Poisson process with time-dependent intensity \( \lambda_t \) and jump magnitudes follow a time-dependent log-normal distribution, \ie, a jump \(Y_t\) at time \( t \) follows \( \ln Y_t \sim \mathcal{N}(\nu_{t}, \gamma_{t}^2) \), where we denote the log-normal density of \(Y_t\) as $f_{Y}(t, y)$. 
Similarly, we define $k_{t} \coloneqq \mathbb{E}[Y_t - 1] = \exp(\nu_{t} + \frac{\gamma_{t}^2}{2}) - 1$ in the drift term.
This makes \cref{eq:NSMJD_sde_exact} equivalent to using the compensated Poisson measure
\(\tilde{N}(\mathrm{d}t, \mathrm{d}y) \coloneqq N(\mathrm{d}t, \mathrm{d}y) - \lambda_t f_{Y}(t, y)\mathrm{d}y\mathrm{d}t \) 
to remove the expected contribution of jumps.
Note~\cref{eq:NSMJD_sde_exact} includes \(k_{t},\) so that 
\(
\int_{\mathbb{R}^d} (y-1) \lambda_t f_{Y}(t, y)\mathrm{d}y\mathrm{d}t = \lambda_t \mathbb{E}[Y_t - 1]\mathrm{d}t = \lambda_t k_t \mathrm{d}t.
\)
Namely, it can be rewritten as,
\begin{align*}
    \mathrm{d} S_t &= S_t\left(\mu_{t} \mathrm{d} t + \sigma_{t}  \mathrm{d} W_t + \int_{\R^d} (y-1) \tilde{N}(\mathrm{d}t, \mathrm{d}y)\right).
\end{align*}
This preserves the martingale property of the process induced by $\tilde{N}(\mathrm{d}t, \mathrm{d}y)$, \eg, \( \mathbb{E}[\mathrm{d}S_t / S_t] \) matches the drift term in the non-stationary Black–Scholes model without jumps.

More importantly, inspired by the amortized inference in VAEs~\cite{kingma2013auto}, we use neural networks to predict these time-inhomogeneous parameters based on the historical data and the contextual information $C$,
\begin{align}
    \mu_t, \sigma_t, \lambda_t, \gamma_t, \nu_t = f_{\theta}(S_0, S_{-1}, \dots, S_{-T_p}, C, t),
\end{align}
where $f$ is a neural network parameterized by $\theta$.
To simplify the notation, we denote the set of all observed data as $\gC \coloneqq \{S_0, S_{-1}, \dots, S_{-T_p}, C\}$ from now on.
Importantly, we only train a single network across all series and optimize the conditional log-likelihood as the training objective, which differs from standard statistical inference. At test time, the network produces context-dependent estimates $(\mu_t,\sigma_t,\lambda_t,\nu_t,\gamma_t)$ across all future time in a single forward pass.

The stochastic process described by the SDE in~\cref{eq:NSMJD_sde_exact} is formally an instance of an \textit{additive process}~\cite[Ch.~14]{tankov2003financial}, characterized by independent but non-stationary increments. 
If $\sigma_t, \mu_t \in L^2$, \ie, they are square-integrable functions, and $\max_\tau (\int_{\R^d} |y^2| \lambda_\tau f_Y(\tau, y)\mathrm{d}y) < \infty$, then our non-stationary MJD has a unique solution for every $S_0 > 0$~\cite[Theorem 14.1]{tankov2003financial}.
As our prediction time horizon is a closed domain $t\in[0, T_f]$, these conditions are easily satisfied as long as the neural network $f_\theta$ does not produce unbounded values.
At any future time \(T\), the explicit solution of the SDE is given by,
\begin{align}
    \label{eq:NSMJD_sol_exact}
    \ln \frac{S_{T}}{S_{0}} &= 
    \int_0^T (\mu_{t}-\lambda_{t} k_{t} - \frac{\sigma_{t}^2}{2})\mathrm{d}t 
    + \int_0^T \sigma_{t} \mathrm{d}W_t  + 
    \int_0^T \int_{\R^d} \ln y {N}(\mathrm{d}t, \mathrm{d} y).
\end{align}

Next, we model the conditional probability of log-return $\ln S_{t}$ given all the observed data $\gC$,
\begin{equation}
\label{eq:NSMJD_one_step_prob_exact}
    P\left(\ln S_{T} \vert \gC\right) = \sum_{n=0}^{\infty}\frac{\exp(-\int_0^T \lambda_t \mathrm{d}t)}{n!} \Phi_n, 
\end{equation}
where 
{\small
\begin{align*}
    \Phi_n &= \int\cdots\int_{[0, T]} \Pi_{i=1}^n \lambda_{t_i} \phi(\ln S_T; a_{n}, b_{n}^2) \, \mathrm{d}t_1 \cdots \mathrm{d}t_n, \\
    a_{n} &= \ln S_T + \int_0^T \left(\mu_t -\lambda_t k_t - \frac{\sigma_t^2}{2} \right) \mathrm{d}t + \sum_{i=1}^n \nu_{t_i}, \ b_{n}^2 = \int_0^T\sigma_t^2 \mathrm{d}t + \sum_{i=1}^n \gamma_{t_i}^2.
\end{align*}}

Here $\phi(\ln S_T; a_{n}, b_{n}^2)$ is the density of a normal distribution with mean $a_{n}$ and variance $b_{n}^2$.
$t_{1:n}$ denote the timing of $n$ jumps.
Further, we compute the conditional expectations as,
\begin{align}\label{eq:NSMJD_cond_mean_exact}
    &\mathbb{E}\left[S_T \vert \gC \right] = S_0 \exp(\int_0^T\mu_t\mathrm{d}t).
\end{align}
Please refer to~\cref{app:NSMJD_derivation} for derivations.
Evaluating~\cref{eq:NSMJD_sol_exact} and~\cref{eq:NSMJD_one_step_prob_exact} is non-trivial due to time inhomogeneity and jumps, typically requiring Monte Carlo methods or partial integro-differential equation techniques for approximate solutions~\cite[Ch. 6, Ch. 12]{tankov2003financial}.

\subsection{Tractable Learning Method}
\label{sect:method_learning}
While \cref{eq:NSMJD_one_step_prob_exact} provides the exact likelihood function, evaluating it precisely is impractical due to 1) integrals with time-dependent parameters lacking closed-form solutions and 2) the infinite series over the number of jumps.
To learn the model via the maximum likelihood principle, we propose a computationally tractable learning objective with parameter bootstrapping and series truncation.

First, given a finite number of future time steps \( \{ 1, \dots, T_f \} \), we discretize the continuous time in SDEs to construct a piecewise non-stationary MJD. 
Our model predicts time-varying parameters \(\{ \mu_{\tau}, \sigma_{\tau}, \lambda_{\tau}, \gamma_{\tau}, \nu_{\tau} \}_{\tau=1}^{T_f}\).
For any time \( t \le T_f\), we map it to an integer index via \(\rho_t \coloneqq \lfloor t \rfloor + 1\). 
Thus, the likelihood of the data at $t+\delta$ given the data at $t$, where $\rho_t -1\leq t < t+\delta < \rho_t$, is given by:
\begin{gather}\label{eq:NSMJD_one_step_prob_relaxed}
    \scalebox{1.0}{ 
    $\begin{align}
        P\left(\ln S_{t+\delta} \vert S_{t}, \gC\right) &= \sum_{n=0}^{\infty} P\left(\Delta N = n\right)  P\left(\ln S_{t+\delta} \vert S_{t}, \gC, \Delta N = n\right) \notag \\
        &=\sum_{n=0}^{\infty} \exp\left({-\lambda_{\rho_{t}} \delta} \right) \lambda_{\rho_{t}}^n \frac{\delta^n}{n!} \phi \left(\ln S_{t+\delta}; a_{n,\delta}, b^2_{n,\delta} \right),
    \end{align}
    $} 
\end{gather}
where $a_{n, \delta} = \ln S_{t} + (\mu_{\rho_t} -\lambda_{\rho_t} k_{\rho_t} - {\sigma_{\rho_t}^2}/{2})\delta + n\nu_{\rho_t}$ and $b_{n,\delta}^2 = \sigma^2_{\rho_t} \delta +\gamma^2_{\rho_t}n$.
This approach eliminates the need for numerical simulation to compute the integrals in~\cref{eq:NSMJD_one_step_prob_exact} and has been widely adopted for jump process modeling~\cite{andersen2000jump, cont2004nonparametric}.

As for the conditional expectation, we have
\begin{align}
    \label{eq:NSMJD_cond_mean_relaxed}
    \mathbb{E}\left[S_{t} \vert \gC \right] = S_0 \exp(\sum_{j=1}^{\rho_t-1} \mu_{j} + (t-\rho_t+1) \mu_{\rho_t}).
\end{align}
Derivation details are shown in~\cref{app:NSMJD_derivation}.
Further, we jointly consider the likelihood of all future data:
\begin{align*}
    P(\ln S_1, \cdots, \ln S_{T_f} \vert \gC)= \prod_{\tau=1}^{T_f}P(\ln S_{\tau} \vert \{\ln S_{j}\}_{j<\tau}, \gC)=\prod_{\tau=1}^{T_f}P(\ln S_{\tau} \vert S_{\tau-1}, \gC),
\end{align*}
where we use the Markov property and the fact that $\ln(\cdot)$ is bijective.

Therefore, the MLE objective is given by:
\begin{align}
  \label{eq:NSMJD_mle}
  \ln P(\ln S_1,\dots,\ln S_{T_f}\mid \gC)
  = \sum_{\tau=1}^{T_f}
    \ln \underbrace{P\bigl(\ln S_{\tau}\mid S_{\tau-1},\,\gC\bigr)}_{\text{\cref{eq:NSMJD_one_step_prob_relaxed}}}.
\end{align}

\begin{figure*}[tb]
  \centering
  \begin{minipage}[t]{0.435\textwidth}
\begin{algorithm}[H]
\footnotesize
\caption{\OurModel Training}
\label{alg:NSMJD_training}
\begin{algorithmic}[1]
\REPEAT
\STATE $(\gC, S_{1:T_f}) \sim \train_{\text{train}}$, \\with $\gC = [S_{-T_p:0},C]$
\STATE $\{\mu_\tau, \sigma_\tau, \lambda_\tau, \nu_\tau, \gamma_\tau\}_{\tau=1}^{T_f} \leftarrow f_{\theta}(\gC)$
\STATE $\hat{S}_0 \leftarrow S_0$
\FOR{$\tau = 1, \cdots, T_f$}
    \STATE $\psi_\tau \leftarrow \ln P(\ln S_{\tau} \mid S_{\tau-1}=\hat{S}_{\tau-1}, \gC)$ 
    \hfill $\rhd$~\cref{eq:NSMJD_one_step_prob_relaxed}
    \STATE $\hat{S}_\tau \leftarrow \mathbb{E}[S_\tau \mid \gC]$
    \hfill $\rhd$~\cref{eq:NSMJD_cond_mean_relaxed}
\ENDFOR
\STATE Update $\theta$ via \\ $-\nabla_\theta \sum_{\tau=1}^{T_f}\left(-\psi_\tau + \omega \Vert S_\tau - \hat{S}_\tau \Vert^2 \right)$
\UNTIL{converged}
\end{algorithmic}
\end{algorithm}
  \end{minipage}%
  \hfill
  \begin{minipage}[t]{0.555\textwidth}
  \begin{algorithm}[H]
  \footnotesize
\caption{Euler-Maruyama with Restart Inference}
\label{alg:NSMJD_restarted_euler}
\begin{algorithmic}[1]
\REQUIRE Solver step size $1/M$
\STATE $\gC \sim \train_{\text{test}}$, with $\gC = [S_{-T_p:0},C]$
\STATE $\{\mu_\tau, \sigma_\tau, \lambda_\tau, \nu_\tau, \gamma_\tau\}_{\tau=1}^{T_f} \leftarrow f_{\theta}(\gC)$
\STATE $t_0 \leftarrow 0$, $N \leftarrow M \times T_f$
\FOR{$i = 1, \cdots, N$}
    \STATE $t_i \leftarrow t_{i-1} + 1/M, \rho_{t_i} \leftarrow \lfloor t_i \rfloor + 1$
    \STATE $\alpha_i \leftarrow (\mu_{\rho_{t_i}}-\lambda_{\rho_{t_i}}  k_{\rho_{t_i}}- {\sigma_{\rho_{t_i}}^2}/{2})/M$ \hfill $\rhd$ Drift 
    \STATE $\beta_i \leftarrow \sigma_{\rho_{t_i}} z_1/\sqrt{M} $, with $ z_1\sim \gN(0, 1)$ \hfill $\rhd$ Diffusion 
    \STATE $\zeta_i \leftarrow \kappa \nu_{\rho_{t_i}} + \sqrt{\kappa} \gamma_{\rho_{t_i}} z_2 $
    \newline with $\kappa \sim \poisson(\lambda_{\rho_{t_i}} / M), z_2 \sim \gN(0, 1)$ \hfill $\rhd$ Jump 
    \IF{$(i-1) \text{ mod } M = 0$}
        \STATE $\ln\bar{S}_{t_{i}} \leftarrow \mathbb{E}[\ln S_{\rho_{t_{i}-1}} \mid \gC] + \alpha_i + \beta_i + \zeta_i$ \hfill $\rhd$ Restart
    \ELSE%
        \STATE $\ln\bar{S}_{t_{i}} \leftarrow \ln \bar{S}_{t_{i-1}} + \alpha_i + \beta_i + \zeta_i$
    \ENDIF
\ENDFOR
\STATE \textbf{return} $\{\bar{S}_{t_i}\}_{i=1}^N$
\end{algorithmic}
\end{algorithm}
  \end{minipage}
\vspace{1em}
\end{figure*}

The training algorithm of our neural non-stationary MJD model is shown in \cref{alg:NSMJD_training}. 
In computing the term \(\ln P(\ln S_{\tau} \vert S_{\tau-1}, \gC)\) of \cref{eq:NSMJD_mle}, instead of doing teacher forcing, we replace the ground truth \(S_{\tau-1}\) with the conditional mean prediction \(\mathbb{E}[S_{\tau-1} \mid \gC]\) from \cref{eq:NSMJD_cond_mean_relaxed}.
This design mitigates the discrepancy between training and inference while reducing error accumulation in sequential predictions, especially for non-stationary data. 
As shown in the ablation study in~\cref{sect:exp_ablation}, this approach improves performance effectively. 
To further improve accuracy, we introduce an additional regularization term that encourages the conditional mean to remain close to the ground truth. 
Additionally, the \textit{for} loop in \cref{alg:NSMJD_training} can be executed in parallel, as the conditional mean computation does not depend on sequential steps, significantly improving efficiency. 
Notably, our model imposes no restrictions on the neural network architecture, and the specific design details are provided in~\cref{app:exp_model_arch}.

\noindent\textbf{Truncation error of likelihood function.}
Exact computation of \( P(\ln S_\tau \mid S_{\tau - 1}, \gC) \) in~\cref{eq:NSMJD_mle} requires evaluating an infinite series, which is infeasible in practice. 
To address this, we truncate the series at order \( \kappa \in \mathbb{N}_+ \), \ie, limiting the maximum number of jumps between consecutive time steps.  
We establish the following theoretical result to characterize the decay rate of the truncation error:
\begin{restatable}{theorem}{truncationError}
    Let the likelihood approximation error in \cref{eq:NSMJD_one_step_prob_relaxed}, truncated to at most $\kappa$ jumps, be
    \[
        \scalebox{1.0}{$
        \Psi_\kappa(t, \delta) \coloneqq \sum_{n=\kappa+1}^{\infty} P\left(\Delta N = n\right)  P\left(\ln S_{t+\delta} \mid S_{t}, \gC, \Delta N = n\right).
        $}
    \]
    Then, $\Psi_\kappa(t, \delta)$ decays at least super-exponentially as $\kappa \to \infty$, with a convergence rate of $O(\kappa^{-\kappa})$.
    \label{the:truncation_error}
\end{restatable}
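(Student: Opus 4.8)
The plan is to reduce the infinite-series tail $\Psi_\kappa$ to a Poisson tail estimate by first disposing of the Gaussian factor with a uniform bound, and then applying an elementary factorial estimate to the remaining Poisson sum. First I would write each summand explicitly using the one-step likelihood in~\cref{eq:NSMJD_one_step_prob_relaxed}: for $\rho_t - 1 \le t < t+\delta < \rho_t$, the $n$-th term equals $\exp(-\lambda_{\rho_t}\delta)\,\tfrac{(\lambda_{\rho_t}\delta)^n}{n!}\,\phi(\ln S_{t+\delta}; a_{n,\delta}, b_{n,\delta}^2)$ with $b_{n,\delta}^2 = \sigma_{\rho_t}^2\delta + \gamma_{\rho_t}^2 n$. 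The crucial observation is that the Gaussian density is bounded by its peak height, $\phi(\cdot; a_{n,\delta}, b_{n,\delta}^2) \le (2\pi b_{n,\delta}^2)^{-1/2}$, and since the variance only grows in $n$ we have $b_{n,\delta}^2 \ge \sigma_{\rho_t}^2\delta$ (and $\ge \gamma_{\rho_t}^2 n$); hence every density factor is dominated by the $n$-independent constant $(2\pi\sigma_{\rho_t}^2\delta)^{-1/2}$. This uniform bound is insensitive to the $n$-dependent mean shift $n\nu_{\rho_t}$ sitting inside $a_{n,\delta}$, which is exactly why bounding at the peak is the convenient move.

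Writing $\mu \coloneqq \lambda_{\rho_t}\delta$, this reduces the claim to controlling the Poisson tail $\sum_{n=\kappa+1}^\infty \mu^n/n!$. I would factor out the leading term and compare the remainder to a geometric series: $\sum_{n=\kappa+1}^\infty \tfrac{\mu^n}{n!} = \tfrac{\mu^{\kappa+1}}{(\kappa+1)!}\sum_{j\ge 0}\tfrac{\mu^j (\kappa+1)!}{(\kappa+1+j)!}$, and for $\kappa+1 > \mu$ the ratio $(\kappa+1)!/(\kappa+1+j)! \le (\kappa+2)^{-j}$ bounds the inner sum by $(1-\mu/(\kappa+2))^{-1} \le 2$ once $\kappa$ is large enough. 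Thus $\Psi_\kappa \le C\,\mu^{\kappa+1}/(\kappa+1)!$ for a constant $C$ depending only on $\sigma_{\rho_t}$, $\delta$, and $\mu$.

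Finally I would invoke the elementary bound $(\kappa+1)! \ge ((\kappa+1)/e)^{\kappa+1}$ (equivalently Stirling) to obtain $\Psi_\kappa \le C\,(e\mu/(\kappa+1))^{\kappa+1}$. Taking logarithms gives $\log\Psi_\kappa = -(\kappa+1)\log(\kappa+1) + O(\kappa) = -\kappa\log\kappa + O(\kappa)$, so the decay is governed by the super-exponential factor $\kappa^{-\kappa}$, establishing the stated $O(\kappa^{-\kappa})$ rate.

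The analytic steps are all routine, so the genuine subtlety lies in matching the explicit bound to the claimed $O(\kappa^{-\kappa})$: the literal envelope $(e\mu/(\kappa+1))^{\kappa+1}$ carries an exponentially growing prefactor $(e\mu)^{\kappa}$, which is dominated by $\kappa^{-\kappa}$ only in the logarithmic-asymptotic sense captured above, not as a strict multiplicative $O(\cdot)$. I would therefore state the conclusion through the logarithmic rate $\log\Psi_\kappa \sim -\kappa\log\kappa$, which is the precise content of ``super-exponential decay with rate $O(\kappa^{-\kappa})$.'' The secondary point requiring a line of care is the uniformity of the Gaussian peak bound in $n$ — the growing variance only helps (it lowers the peak) and the drifting mean is irrelevant under the peak bound, so no term escapes the constant envelope — and the condition $\kappa+1>\mu$ needed for the geometric step is automatic in the regime $\kappa\to\infty$.
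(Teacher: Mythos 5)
Your proposal is correct, and after the shared opening move it takes a genuinely different route from the paper's proof. Both arguments begin identically: bound the Gaussian factor by its peak height $(2\pi b_{n,\delta}^2)^{-1/2}$ so that the tail reduces to a Poisson tail (the paper retains the $n$-dependent variance and factors out $(2\pi b_{\kappa+1,\delta}^2)^{-1/2}$, gaining an extra $\kappa^{-1/2}$ that is immaterial at this decay scale, while you use the cruder but equally valid $n$-independent constant $(2\pi\sigma_{\rho_t}^2\delta)^{-1/2}$). From there the paper invokes a sharp Poisson CDF concentration inequality (\cref{lem:poisson_cdf_bound}, from~\cite{short2013improved,zubkov2013complete}) expressing the tail through the KL divergence $\KL(\poisson(\lambda_{\rho_t}\delta)\,\Vert\,\poisson(\kappa))$, and then controls the resulting standard normal CDF by Mills' ratio (\cref{lem:gaussian_cdf_bound}); you instead estimate $\sum_{n>\kappa}\mu^n/n!$ bare-handedly, factoring out $\mu^{\kappa+1}/(\kappa+1)!$, comparing the remainder to a geometric series via $(\kappa+1)!/(\kappa+1+j)!\le(\kappa+2)^{-j}$, and finishing with $n!\ge(n/e)^n$. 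Your route is self-contained and more elementary—no external concentration lemma, no Gaussian-tail machinery—and yields the transparent envelope $C\,(e\mu/(\kappa+1))^{\kappa+1}$; what the paper's route buys is a tail bound that is tight non-asymptotically (valid for all $\kappa\ge 1$, not only once $\kappa+2\ge 2\mu$) together with the explicit polynomial refinement $1/(\kappa\sqrt{\ln\kappa})$ in the denominator, neither of which affects the stated rate. Your closing caveat about the exponentially growing prefactor is well taken, and it is worth noting that it applies verbatim to the paper's own final bound, whose numerator is $e^{-\mu}(e\mu)^{\kappa}\kappa^{-\kappa}$: the paper's claim of $O(\kappa^{-\kappa})$ likewise holds only in the logarithmic sense $\ln\Psi_\kappa\le-\kappa\ln\kappa+O(\kappa)$, so your decision to state the conclusion through the log-rate is faithful to—indeed slightly more careful than—what the paper actually establishes.
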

The proof is provided in \cref{app:truncation_error}.
The truncation error is dominated by \(\kappa\), with other time-dependent parameters absorbed into the big-$O$ notation. 
We set \(\kappa\) to 5 to achieve better empirical performance.

\subsection{Inference based on Euler Scheme}
\label{sect:method_inference}

Once trained, our \OurModel model enables simulations following~\cref{eq:NSMJD_sde_exact} by computing the non-stationary SDE parameters with a single neural function evaluation (NFE) of \( f_\theta \).  
Unlike models limited to point-wise predictions, \OurModel supports continuous-time simulation across the entire future horizon.  
Although the training data consists of a finite set of time steps, \( S_{1:T_f} \), the learned model can generate full trajectories from \( t = 0 \) to \( t = T_f \) at arbitrary resolutions. 

\begin{figure}[t]
    \centering
    \includegraphics[width=\linewidth]{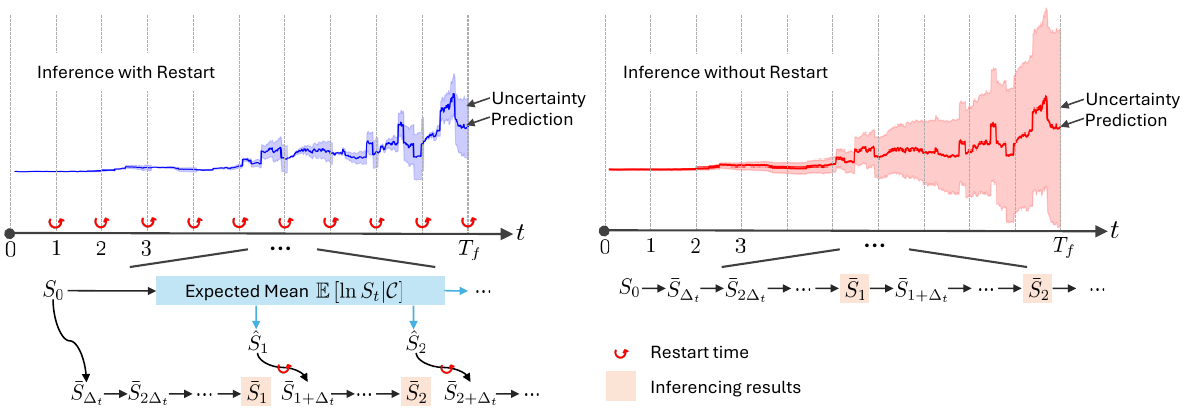} 
    \caption{Comparison of numerical simulations with and without restart strategy during inference.}        
    \label{fig:solver}
    \vspace{0.2cm}
\end{figure}

The standard Euler-Maruyama (EM) method provides a general-purpose approach for simulating SDEs with simple implementation and proven convergence~\cite{kloeden1992stochastic, protter1997euler}. However, MJD SDEs exhibit analytically derived variance that grows over time (see~\cref{app:NSMJD_derivation}), and the simulated trajectories produced using the vanilla EM, assuming sufficiently small error, reflect this growth as well.
Notably, the resulting high-variance simulations can undermine the empirical reliability of future forecasts.

In our MJD model, it is possible to compute closed-form expressions for statistical properties such as the mean and variance at any point in time~\cite{tankov2003financial}; for instance, the analytical mean can be derived from~\cref{eq:NSMJD_cond_mean_relaxed}. Building on this insight, we propose a hybrid analytic-simulation solver, the \textit{Euler-Maruyama with restart} method, which periodically injects the exact analytical mean to improve accuracy and enhance stability, as shown in~\cref{alg:NSMJD_restarted_euler} and~\cref{fig:solver}.
Specifically, we discretize time using a uniform step size ${1}/{M}$ for simulation and set the restart points as the target times $\{1, \cdots, T_f\}$.  
The solver follows the standard EM method for \cref{eq:NSMJD_sde_exact} whenever a restart is unnecessary.
Otherwise, it resets the state using the conditional expectation from \cref{eq:NSMJD_cond_mean_relaxed}.

Further, we prove that this restart strategy has a tighter weak-convergence error, particularly helpful for empirical forecasting tasks where the mean estimation is critical.
Let $\epsilon_t \coloneqq \vert \E[g(\Bar{S}_t)] - \E[g(S_t)] \vert$ be the standard weak convergence error~\cite{kloeden1992stochastic}, where \(S_t\) is the ground truth state, \(\Bar{S}_t\) is the estimated one using certain sampling scheme and $g$ is a $K$-Lipschitz continuous function.
We denote the weak convergence errors of our restarted solver and the standard EM solver by $\epsilon^R_t$ and $\epsilon^E_t$, respectively.
\begin{restatable}{proposition}{eulerError}
   Let \(1/M\) be the step size. 
   Both standard EM and our solver exhibit a weak convergence rate of \(O(1/M)\).
   Specifically, the vanilla EM has a weak error of \(\epsilon_t^E \leq K\exp(Lt) /M\) for some constant \(L > 0\), while ours achieves a tighter weak error of \(\epsilon_t^R \leq K\exp(L(t - \lfloor t \rfloor)) / M\).
    \label{pro:euler_error}
\end{restatable}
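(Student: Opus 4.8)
The plan is to bound the weak error by decomposing the global EM error into a telescoping sum of one-step local errors, and then to show that the restart shortens the time horizon over which these local errors are allowed to propagate. Throughout I work with the log-state $X_t = \ln S_t$, for which \cref{eq:NSMJD_sde_exact} has coefficients that do not depend on the state (the drift, diffusion, and log-jump magnitudes are deterministic functions of time given $\gC$); this log-linearity is what makes the restart tractable. Let $t_i = i/M$ be the simulation grid, define the exact value function $u(s,x) \coloneqq \E[g(S_t)\mid S_s = x]$, which by Feynman--Kac solves the backward partial integro-differential equation associated with the SDE, and let $P_{t_{i-1},t_i}$ and $\bar P_{t_{i-1},t_i}$ denote the exact and the one-step EM transition operators on $[t_{i-1},t_i]$.

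First I would establish two ingredients. \emph{(i) Local error}: a single EM step incurs a weak error of order $M^{-2}$, i.e. $\|(\bar P_{t_{i-1},t_i}-P_{t_{i-1},t_i})\varphi\|_\infty \le C\,M^{-2}$ for sufficiently regular $\varphi$. This follows from an It\^o--Taylor expansion of the increment in \cref{eq:NSMJD_sde_exact}: freezing the time-inhomogeneous coefficients $\mu_t,\sigma_t,\lambda_t,\nu_t,\gamma_t$ over a step introduces a discrepancy of order $(1/M)\cdot(1/M)$ since they vary by $O(1/M)$ across an interval of length $1/M$, while the compensated compound-Poisson term contributes no lower-order bias once $-\lambda_t k_t\,\mathrm dt$ is retained. \emph{(ii) Flow regularity}: because $g$ is $K$-Lipschitz and the coefficients are Lipschitz with constant controlled by some $L>0$, a Gr\"onwall estimate on the first-variation process of the SDE gives that $x\mapsto u(s,x)$ is Lipschitz with constant at most $K\exp\!\big(L(t-s)\big)$.

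With these in hand, the standard EM bound follows from the ``Lady Windermere's fan'' identity
\begin{align*}
    \E[g(\bar S_t)]-\E[g(S_t)] = \sum_{i=1}^{N}\big(\bar P_{0,t_{i-1}}(\bar P_{t_{i-1},t_i}-P_{t_{i-1},t_i})\,u(t_i,\cdot)\big)(S_0).
\end{align*}
Bounding each summand by (i) times the Lipschitz constant of $u(t_i,\cdot)$ from (ii) yields $\sum_i C M^{-2}\,K\exp(L(t-t_i))$, and recognizing the Riemann sum $M^{-1}\sum_i \exp(L(t-t_i)) \le \int_0^t \exp(L(t-s))\,\mathrm ds \le \exp(Lt)/L$ gives $\epsilon_t^E \le K\exp(Lt)/M$ after absorbing constants into $L$. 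For the restart solver of \cref{alg:NSMJD_restarted_euler} the same decomposition applies, but at every integer time $j \le \lfloor t\rfloor$ the simulated state is overwritten with the exact conditional mean $\E[\ln S_j\mid\gC]$ from \cref{eq:NSMJD_cond_mean_relaxed}. Hence the propagated error is zeroed out at $t=\lfloor t\rfloor$, and the telescoping sum effectively restarts there: only the steps in $[\lfloor t\rfloor, t]$ contribute, so the propagation weight $\exp(L(t-s))$ ranges over $s\in[\lfloor t\rfloor,t]$ and sums to at most $\exp(L(t-\lfloor t\rfloor))/M$ times constants, yielding $\epsilon_t^R \le K\exp(L(t-\lfloor t\rfloor))/M$.

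The main obstacle is justifying that restarting to the conditional \emph{mean}, rather than resampling from the exact conditional \emph{law}, does not reintroduce an $O(1)$ weak error that would swamp the $O(1/M)$ rate. The resolution hinges on the log-linear structure noted above: since the coefficients of the $X_t$-dynamics are state-independent, the update splits additively into an exactly-propagated deterministic mean and a centered fluctuation whose law over a unit interval is reproduced up to the $O(1/M)$ local error, and the only quantity that accumulates across unit intervals is precisely the mean that the restart sets exactly. I would therefore decompose $X_t$ into this mean plus fluctuation, show the restart annihilates the accumulating mean-bias while leaving the fluctuation governed by the within-interval analysis of (i)--(ii), and control the cross terms arising from $S=\exp(X)$ via the Lipschitz bound on $g$. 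Carefully bookkeeping these cross terms --- and confirming that the variance reduction illustrated in \cref{fig:solver} does not degrade the weak error for nonlinear Lipschitz $g$ --- is the delicate part of the argument.
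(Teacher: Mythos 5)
For the plain--EM half of the statement you take a genuinely different route from the paper: the paper's proof (\cref{app:euler_error}) is citation-based, invoking Protter--Talay's Theorem 2.2 for the $O(1/M)$ rate, their Lemma 4.1 together with Bichteler for the $e^{Lt}$ growth factor, and the Euler--Peano remark in Bichteler to cover the piecewise-constant (time-inhomogeneous) coefficients, whereas you reconstruct the bound from scratch via the Talay--Tubaro telescoping (``Lady Windermere's fan'') with $O(M^{-2})$ one-step errors and a Gr\"onwall estimate giving the flow-Lipschitz constant $K e^{L(t-s)}$. That outline is sound and arguably buys more than the paper does, since it is self-contained and makes the origin of the $e^{Lt}$ factor explicit (modulo the usual regularization of $u(s,\cdot)$ needed for the It\^o--Taylor expansion when $g$ is merely Lipschitz).

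The genuine gap is in your resolution of the obstacle you yourself flag for the restart half. The claim that ``the only quantity that accumulates across unit intervals is precisely the mean that the restart sets exactly'' is false for nonlinear Lipschitz $g$. Write $X_t=\ln S_t=X_{\lfloor t\rfloor}+\Delta$ with $\Delta$ independent of $X_{\lfloor t\rfloor}$; then $\E\bigl[g(e^{X_{\lfloor t\rfloor}+\Delta})\bigr]$ depends on the \emph{entire law} of $X_{\lfloor t\rfloor}$, whose variance is of order $\lfloor t\rfloor$ (it grows linearly in time, see the variance formula in \cref{app:NSMJD_derivation}) and does not shrink with $M$. Collapsing $X_{\lfloor t\rfloor}$ to a point mass at (the log of) its mean therefore perturbs $\E[g(\bar S_t)]$ by $O(1)$: take $g(s)=|s-c|$, for which the restarted trajectory sees only one interval's worth of spread while the true process carries the spread accumulated over $[0,t]$. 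The variance reduction shown in \cref{fig:solver} is, for such distribution-sensitive $g$, precisely this bias, and no bookkeeping of cross terms can push it below $O(1/M)$. So the bound $\epsilon^R_t\le K e^{L(t-\lfloor t\rfloor)}/M$ is provable only for affine $g$ (i.e., mean estimation, which is how the paper motivates the result), or after redefining $\epsilon^R_t$ against the mean-restarted reference dynamics on each unit interval --- which is what the paper's informal Step~3 implicitly does when it declares the error ``reset'' at integer times; the paper's own sketch silently elides the same mean-versus-law issue that you correctly identified but did not actually overcome. A smaller related slip: even for $g(x)=x$ your decomposition only closes if the restart value is $\ln\E[S_{\lfloor t\rfloor}\mid\gC]$ from \cref{eq:NSMJD_cond_mean_relaxed}, not $\E[\ln S_{\lfloor t\rfloor}\mid\gC]$; restarting at the mean of the log introduces an $O(1)$ multiplicative Jensen gap of the form $\exp\bigl(\mathrm{Var}[\ln S_{\lfloor t\rfloor}]/2\bigr)$, so your ``exactly-propagated deterministic mean'' must be the mean of $S$, not of $\ln S$.
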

The proof and details are left to~\cref{app:euler_error}.
Our sampler is in the same spirit as the Parareal simulation algorithms~\cite{lions2001resolution, bal2003parallelization, bal2002parareal, bossuyt2023monte}: it first obtains estimates at discrete steps and then runs fine-grained simulations for each interval. By resetting the state to the true conditional mean at the start of each interval, our sampler reduces mean estimation error and prevents error accumulation over time.
Notably, the SDE simulation requires no additional NFEs and adds negligible computational overhead compared to neural-network inference, since it involves only simple arithmetic operations that can be executed efficiently on CPUs.
For reference, we also present the standard EM solver in~\cref{app:euler_solver}.

\begin{figure*}[t]
  \centering
  \begin{minipage}[tbh]{0.52\linewidth}
    \centering
    \footnotesize
    \captionof{table}{Quantitative results on the \textbf{synthetic} dataset.}
    \label{tab:synthetic}
    \resizebox{\linewidth}{!}{%
      \begin{tabular}{l|cc|cc|cc}
        \toprule
        & \multicolumn{2}{c|}{Mean}
        & \multicolumn{2}{c|}{Best-of-$K$}
        & \multicolumn{2}{c}{Probabilistic} \\
        \cmidrule(lr){2-3}\cmidrule(lr){4-5}\cmidrule(lr){6-7}
Model 
& MAE$\downarrow$ 
& R$^2$$\uparrow$ 
& minMAE$\downarrow$ 
& maxR$^2$$\uparrow$ 
& $p$-MAE$\downarrow$ 
& $p$-R$^2$$\uparrow$\\
        \midrule
        ARIMA      & 0.29 & –0.15 & \multicolumn{2}{c|}{N/A} & \multicolumn{2}{c}{N/A} \\
        BS         & 0.25 &  0.02 & 0.20 & 0.12 & 0.22 & 0.08 \\
        MJD        & 0.21 &  0.08 & 0.18 & 0.15 & 0.20 & 0.09 \\
        \midrule
        XGBoost    & 0.18 &  0.17 & \multicolumn{2}{c|}{N/A} & \multicolumn{2}{c}{N/A} \\
        MLP        & 0.14 &  0.21 & \multicolumn{2}{c|}{N/A} & \multicolumn{2}{c}{N/A} \\
        NJ-ODE     & 0.15 &  0.20 & \multicolumn{2}{c|}{N/A} & \multicolumn{2}{c}{N/A} \\
        \midrule
        Neural BS  & 0.15 &  0.25 & 0.10 & 0.35 & 0.14 & 0.29 \\
        \OurModel (ours)  & \textbf{0.09} & \textbf{0.32}
                   & \textbf{0.07} & \textbf{0.39}
                   & \textbf{0.09} & \textbf{0.34} \\
        \bottomrule
      \end{tabular}%
    }
  \end{minipage}
  \hfill
  \begin{minipage}[tbh]{0.465\linewidth}
    \centering
    \includegraphics[width=\linewidth]{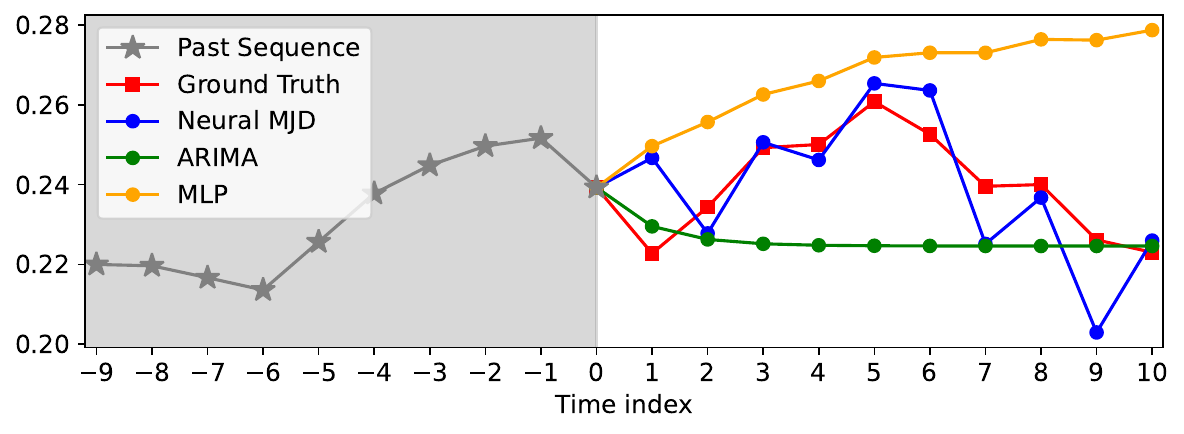}
    \captionof{figure}{Qualitative result on the \textbf{synthetic} dataset.}
    \label{fig:syn_qualitative}
  \end{minipage}
  \vspace{0.3cm}
\end{figure*}

\section{Experiments}\label{sect:exp}
In this section, we extensively examine \OurModel's performance on synthetic and real-world time-series datasets, highlighting its applicability in business analytics and stock price prediction. 

\noindent\textbf{Baselines.}
We evaluate \OurModel against a wide range of competitors, including statistical methods such as ARIMA~\cite{box2015time}, the BS model, and the MJD model. Additionally, we compare against learning-based approaches, including supervised learning models such as XGBoost~\cite{chen2016xgboost}, MLPs, GCNs~\cite{kipf2016semi}, as well as denoising diffusion models like DDPM~\cite{ho2020denoising}, EDM~\cite{karras2022elucidating}, and flow matching (FM)~\cite{lipman2022flow}. 
We include recent neural ODE or SDE based learning methods such as NeuralCDE~\cite{kidger2020neural} and LatentSDE~\cite{li2020scalable} for comparisons. NJ-ODE~\cite{herrera2021neural} was further included on the S\&P 500 dataset.
We also design a baseline model, \textit{Neural BS}, which shares the same architecture as \OurModel but omits the jump component. 
For DDPM, EDM, FM, {Neural BS}, and \OurModel, we share the same transformer-based backbone to ensure a fair comparison. 
Since some datasets contain graph-structured data as seen in the following section, we incorporate additional graph encoding steps based on Graphormer~\cite{ying2021transformers} to capture spatial features, which also justifies the inclusion of GCN as a baseline. 
Further details are provided in~\cref{app:exp}.

\noindent\textbf{Evaluation metrics.}
We employ Mean Absolute Error (MAE), Mean Squared Error (MSE), and the R-squared (R\textsuperscript{2}) score as the primary evaluation metrics. 
To account for stochastic predictions, we run each stochastic models $K=10$ times and report results across three types of metrics:
1) \textbf{Mean Metrics}: used for deterministic models or to average the results of stochastic models;
2) \textbf{Best-of-$K$ (Oracle) Metrics}: we select the best prediction from $K$ stochastic samples to compute minMAE, minMSE, and maxR\textsuperscript{2};
3) \textbf{Probabilistic Metrics}: these metrics assess the likelihood of stochastic predictions and select the most probable outcome to calculate $p$-MAE, $p$-MSE, and $p$-R\textsuperscript{2}.
We mark N/A for inapplicable metrics for certain methods.
Please refer to~\cref{app:exp} for more details.

\subsection{Synthetic Data}
\noindent\textbf{Data generation.}
We evaluate our algorithm on a scalar Merton jump diffusion model. The dataset consists of $N = 10,000$ paths, generated using the standard EM scheme with 100 time steps.
Using a sliding window with stride 1, we predict the next 10 frames from the past 10. The data is split into 60\% training, 20\% validation, and 20\% testing. 
Refer to~\cref{app:exp} for details.

\noindent\textbf{Results.}
\cref{tab:synthetic} reports quantitative results on the jump-driven synthetic dataset. Learning-based methods outperform traditional statistical models (ARIMA, BS, and MJD), and our \OurModel tops all three evaluation protocols, surpassing Neural BS thanks to its explicit jump modeling objective.
Qualitatively, as shown in~\cref{fig:syn_qualitative}, our \OurModel generates larger, realistic jumps, while the baselines produce smoother but less accurate trajectories.

\begin{table*}[t]
\caption{Quantitative results on the \textbf{SafeGraph\&Advan} business analytics dataset.}
\centering
\resizebox{\linewidth}{!}{
\begin{tabular}{l|ccc|ccc|ccc}
\toprule
Metrics & \multicolumn{3}{c|}{Mean} 
& \multicolumn{3}{c|}{Best-of-$K$} 
& \multicolumn{3}{c}{Probabilistic}\\
\cmidrule(lr){1-1} \cmidrule(lr){2-4} \cmidrule(lr){5-7} \cmidrule(lr){8-10}
Model 
& MAE$\downarrow$ 
& MSE$\downarrow$ 
& R$^2$$\uparrow$ 
& minMAE$\downarrow$ 
& minMSE$\downarrow$ 
& maxR$^2$$\uparrow$ 
& $p$-MAE$\downarrow$ 
& $p$-MSE$\downarrow$ 
& $p$-R$^2$$\uparrow$\\
\midrule
ARIMA    & 152.6 & 1.66e05 & -0.183 & \multicolumn{3}{c|}{N/A} & \multicolumn{3}{c}{N/A} \\
BS & 135.5 & 1.05e05 & 0.102 & 112.8 & 9.01e04  & 0.159 & 121.5 & 9.87e04   & 0.138 \\
MJD & 131.8 & 9.98e04 & 0.127 & 109.6 & 8.48e04 & 0.169 & 117.6 & 9.02e04 & 0.144 \\
\midrule
XGBoost  & 124.0 & 9.76e04 & 0.303  & \multicolumn{3}{c|}{N/A} & \multicolumn{3}{c}{N/A} \\
MLP      & 109.5 & 8.18e04 & 0.416 & \multicolumn{3}{c|}{N/A} & \multicolumn{3}{c}{N/A} \\
GCN      & 95.2  & 7.12e04 & 0.432 & \multicolumn{3}{c|}{N/A} & \multicolumn{3}{c}{N/A} \\
\midrule
DDPM     & 68.5  & 4.75e04 & 0.501 & 58.9   & 4.48e04 & 0.529 & \multicolumn{3}{c}{N/A} \\
EDM      & 57.6  & 4.35e04 & 0.525 & 49.4   & 3.76e04 & 0.556 & \multicolumn{3}{c}{N/A} \\
FM      & \underline{{54.5}}  & 4.32e04 & \underline{{0.540}}& 47.8   & 3.58e04 & 0.552  & \multicolumn{3}{c}{N/A} \\
NeuralCDE & 94.6   & 7.09e04       & 0.425      & \multicolumn{3}{c}{N/A} & \multicolumn{3}{c}{N/A}  \\
LatentSDE & 75.7   & 5.26e04       & 0.487      & 66.5      & 4.58e04       & 0.498   & \multicolumn{3}{c}{N/A}   \\
\midrule
Neural BS & 56.4  & \textbf{{4.17e04}} & 0.539 & \underline{{45.6}}   & \underline{{3.45e04}} & \underline{{0.561}} & \underline{{55.9}} & \underline{{4.16e04}} & \underline{{0.538}}  \\
\OurModel (ours) & \textbf{{54.1}}  & \underline{{4.18e04}} & \textbf{{0.549}} & \textbf{{42.3}}   & \textbf{{3.19e04}} & \textbf{{0.565}} & \textbf{{53.0}} & \textbf{{4.10e04}} & \textbf{{0.550}} \\
\bottomrule
\end{tabular}
}
\label{tab:business}
\end{table*}

\begin{table*}[h]
\centering
\caption{Quantitative results on the \textbf{S\&P 500} stock dataset.}
\resizebox{0.99\textwidth}{!}{
\begin{tabular}{l|ccc|ccc|ccc}
\toprule
Metrics & \multicolumn{3}{c|}{Mean} 
& \multicolumn{3}{c|}{Best-of-$K$} 
& \multicolumn{3}{c}{Probabilistic}\\
\cmidrule(lr){1-1} \cmidrule(lr){2-4} \cmidrule(lr){5-7} \cmidrule(lr){8-10}
Model 
& MAE$\downarrow$ 
& MSE$\downarrow$ 
& R$^2$$\uparrow$ 
& minMAE$\downarrow$ 
& minMSE$\downarrow$ 
& maxR$^2$$\uparrow$ 
& $p$-MAE$\downarrow$ 
& $p$-MSE$\downarrow$ 
& $p$-R$^2$$\uparrow$\\
\midrule
ARIMA  & 62.1 & 3.67e04 & -0.863 & \multicolumn{3}{c|}{N/A} &  \multicolumn{3}{c}{N/A} \\
BS & 65.1 & 4.01e04 & 0.052 & 44.6 & 1.79e04 & 0.145 & 52.8 & 2.03e04 & 0.105 \\
MJD & 64.3 & 3.58e04 & 0.092 & 40.7 & 1.22e04 & 0.235 & 49.7 & 1.67e04 & 0.112 \\
\midrule
XGBoost  & 44.3 & 1.64e04 & 0.170 & \multicolumn{3}{c|}{N/A} &  \multicolumn{3}{c}{N/A} \\
MLP  & 44.4 & 1.57e04 & 0.205 & \multicolumn{3}{c|}{N/A} &  \multicolumn{3}{c}{N/A} \\
GCN  & 44.7 & 1.53e04 & 0.224 & \multicolumn{3}{c|}{N/A} &  \multicolumn{3}{c}{N/A} \\
NJ-ODE & 46.8 & 1.69e04 & 0.208 & \multicolumn{3}{c|}{N/A} &  \multicolumn{3}{c}{N/A} \\
\midrule
DDPM & {42.2} & {1.88e04} & {0.235} & 36.8 & 8.42e03 & 0.470 &  \multicolumn{3}{c}{N/A} \\
EDM & {37.1} & {1.68e04} & {0.249} & 27.6 & 5.01e03 & 0.542 &  \multicolumn{3}{c}{N/A} \\
FM & {34.9} & {8.47e03} & {0.368} & 19.8 & 3.59e03 & 0.625 &  \multicolumn{3}{c}{N/A} \\
NeuralCDE & 42.8   & 1.46e04       & 0.201      & \multicolumn{3}{c|}{N/A} & \multicolumn{3}{c}{N/A}  \\
LatentSDE & 39.8   & 1.44e04 & 0.212      & 20.8      & 3.49e03       & 0.617   & \multicolumn{3}{c}{N/A}   \\
\midrule
Neural BS & \underline{31.6} & \underline{4.32e03} & \underline{0.781} & \underline{{12.6}} & \underline{{8.04e02}} & \underline{{0.959}} & \underline{{22.3}} & \underline{{2.19e03}} & \underline{{0.889}} \\
Neural MJD (ours) &\textbf{15.4} & \textbf{1.36e03} & \textbf{0.953} & \textbf{{4.3}} & \textbf{{1.46e02}} & \textbf{{0.995}} & \textbf{{13.6}} & \textbf{{1.08e03}} & \textbf{{0.963}} \\
\bottomrule
\end{tabular}
}
\vspace{0.4cm}
\label{tab:sp500}
\end{table*}

\subsection{Real-World Business and Financial Data}
\noindent\textbf{Business analytics dataset.}
The \textbf{SafeGraph\&Advan} business analytics dataset combines proprietary data from Advan~\cite{advan} and SafeGraph~\cite{safegraph} to capture daily customer spending at points of interest (POIs) in Texas, USA. It includes time-series features (\eg, visits, spending) and static features (\eg, parking availability) for each POI, along with ego graphs linking each POI to its 10 nearest neighbors. Using a sliding window of 14 input days to predict the next 7, the dataset spans Jan.--Dec. 2023 for training, Jan. 2024 for validation, and Feb.--Apr. 2024 for testing.

\noindent\textbf{Stock price dataset.}
The \textbf{S\&P 500} dataset~\cite{sp500_kaggle} is a public dataset containing historical daily prices for 500 major US stocks.
It comprises time-series data without additional contextual information. We construct a simple fully connected graph among all listed companies.
Similarly to the business analytics dataset, we employ a sliding window approach with a stride of 1, using the past 14 days as input to predict the next 7 days. The dataset is divided into training (Jan.–Dec. 2016), validation (Jan. 2017), and testing (Feb.–Apr. 2017) sets. 
Refer to~\cref{app:exp} for further details about the datasets.

\noindent\textbf{Results.}
\cref{tab:business} reports results on the SafeGraph\&Advan dataset covering POIs revenue prediction, which is measured in dollars.
Denoising generative models (\eg, DDPM, EDM, FM) show strong performance, significantly outperforming simple supervised baselines like GCN. \OurModel further improves upon the strong FM baseline, especially in Best-of-$K$ metrics, indicating better diversity and accuracy in generating plausible outcomes through simulated jumps.
While denoising models support likelihood evaluation, their high computational cost—requiring hundreds of NFEs—makes them unsuitable for large datasets. In contrast, \OurModel enables fast likelihood evaluation without such overhead, which enables the computation of probabilistic metrics.

\cref{tab:sp500} shows similar results on the S\&P 500 dataset. FM again outperforms conventional baselines, including ODE based NJ-ODE and NeuralCDE, and \OurModel achieves the best overall performance, effectively capturing volatility and discontinuities in stock time-series data. For completeness, we also report results for additional deterministic time-series baselines in Appendix~\ref{app:third_party_nf}. 

\begin{table*}[htbp]
  \centering
  \begin{minipage}[t]{0.56\linewidth}
    \centering
    \footnotesize
    \caption{Ablation study on the effect of teacher forcing (TF) and Euler–Maruyama (EM).}\label{tab:ablation_train}
    \resizebox{\linewidth}{!}{%
      \begin{tabular}{l|cc|cc|cc}
        \toprule
                  & \multicolumn{2}{c|}{Mean} 
                  & \multicolumn{2}{c|}{Best-of-$K$} 
                  & \multicolumn{2}{c}{Probabilistic} \\
        \cmidrule(lr){2-3}\cmidrule(lr){4-5}\cmidrule(lr){6-7}
        Model     & MAE$\downarrow$ & $R^2\uparrow$ 
                  & minMAE$\downarrow$ & max$R^2\uparrow$ 
                  & $p$-MAE$\downarrow$ & p-$R^2\uparrow$ \\
        \midrule
        Ours      & 66.7  & 0.495 & 57.4 & 0.511 & 64.5 & 0.499 \\
        w.\ TF    & 101.5 & 0.325 & 85.6 & 0.331 & 99.8 & 0.324 \\
        w.\ EM    & 85.6  & 0.397 & 79.4 & 0.423 & 84.4 & 0.405 \\
        \bottomrule
      \end{tabular}%
    }
  \end{minipage}
  \hfill
  \begin{minipage}[t]{0.42\linewidth}
    \centering
    \footnotesize
    \caption{Runtime comparison of various methods.}\label{tab:ablation_runtime}
    \resizebox{\linewidth}{!}{%
      \begin{tabular}{l|c|c|c}
        \toprule
        Model          & Train (ms) & 1-run (ms) & 10-run (ms) \\
        \midrule
        MLP            & 65.2  & 52.2  & N/A     \\
        GCN            & 271.3 & 250.7 & N/A     \\
        FM             & 184.6 & 275.4 & 2696.3  \\
        {Ours}  & 183.5 & 166.8 & 179.2   \\
        \bottomrule
      \end{tabular}%
    }
  \end{minipage}
  \vspace{0.3cm}
\end{table*}

\subsection{Ablation Study}
\label{sect:exp_ablation}

We perform ablation studies to evaluate (i) the training algorithm described in~\cref{alg:NSMJD_training} and (ii) the Euler-Maruyama with restart solver introduced in~\cref{sect:method_inference}. 
For the ablations, we use 10\% of the SafeGraph\&Advan business analytics training set for training and evaluate on the full validation set. 

The results are presented in \cref{tab:ablation_train}.
Our training algorithm computes the MLE loss using the model predictions instead of ground truth, unlike teacher forcing. This improves training stability and reduces the generalization gap. Additionally, we empirically show that the vanilla EM solver results in higher variance and worse performance compared to our solver.

Additionally, we compare the runtime of our method against various baselines in~\cref{tab:ablation_runtime}. Thanks to the efficient numerical simulation-based forecasting framework that does not increase NFEs, our models are particularly well-suited for efficient multi-run stochastic predictions.

\section{Conclusion}\label{sect:conclusion}

We introduced \textit{\OurModel}, a neural non-stationary Merton jump diffusion model for time series forecasting. 
By integrating a time-inhomogeneous Itô diffusion and a time-inhomogeneous compound Poisson process, our approach captures non-stationary time series with abrupt jumps. 
We further proposed a likelihood truncation mechanism and an improved solver for efficient training and inference respectively. 
Experiments demonstrate that \textit{\OurModel} outperforms state-of-the-art approaches. 
Future work includes extending to more challenging data types like videos.

\begin{ack}
This work was funded, in part, by the NSERC DG Grant (No. RGPIN-2022-04636), the Vector Institute for AI, Canada CIFAR AI Chair, a Google Gift Fund, and the CIFAR Pan-Canadian AI Strategy through a Catalyst award. 
Resources used in preparing this research were provided, in part, by the Province of Ontario, the Government of Canada through the Digital Research Alliance of Canada \url{alliance.can.ca}, and companies sponsoring the Vector Institute \url{www.vectorinstitute.ai/\#partners}, and Advanced Research Computing at the University of British Columbia. 
Additional hardware support was provided by John R. Evans Leaders Fund CFI grant.
Y.L. and Y.G. are supported by the NSF grant IIS-2153468.
Q.Y. is supported by UBC Four Year Doctoral Fellowship.
\end{ack}


\clearpage
\bibliography{ref}
\bibliographystyle{ieeetr}

\clearpage
\appendix

\section*{Appendix}
\startcontents
\subsection*{TABLE OF CONTENTS} 
\printcontents{l}{1}{\setcounter{tocdepth}{2}}

\clearpage

\section{Derivations of the Stationary Merton Jump Diffusion Model}
\label{app:MJD_derivation}

In this section, we briefly review the mathematical derivations from classical textbooks to ensure the paper is self-contained. Our primary focus is on the case where the state variable \( S \) is scalar, as is common in many studies. However, in~\cref{sect:model}, we extend our analysis to the more general \( \mathbb{R}^d \) setting.
Notably, in our framework, we do not account for correlations among higher-dimensional variables. For instance, the covariance matrix of the Brownian motion is assumed to be isotropic, meaning all components have the same variance.
To maintain clarity and consistency with standard textbook conventions, we adopt scalar notations throughout this section for simplicity.

\subsection{MJD and L\'evy Process}
\begin{definition}
    \textbf{L\'evy process} \citep[Definition 3.1]{tankov2003financial}
    A càdlàg ({right-continuous with left limits}) stochastic process \( (X_t)_{t \geq 0} \) on \( (\Omega, \mathcal{F}, \mathbb{P}) \) with values in \( \mathbb{R}^d \) such that \( X_0 = 0 \) is called a L\'evy process if it possesses the following properties:
    \begin{enumerate}
        \item {Independent increments}: For every increasing sequence of times \( t_0, t_1, \dots, t_n \), the random variables 
        \(
        X_{t_0}, X_{t_1} - X_{t_0}, \dots, X_{t_n} - X_{t_{n-1}}
        \)
        are independent.
        \item {Stationary increments}: The law of \( X_{t+h} - X_t \) does not depend on \( t \).
        \item {Stochastic continuity}: For all \( \varepsilon > 0 \),
        \(
        \lim_{h \to 0} \mathbb{P}(|X_{t+h} - X_t| \geq \varepsilon) = 0.
        \)
    \end{enumerate}
\end{definition}

A L\'evy process \( (X_t)_{t\geq 0} \) is a stochastic process that generalizes jump-diffusion dynamics, incorporating both continuous Brownian motion and discontinuous jumps. The Merton Jump Diffusion (MJD) model given by,
\begin{align}
    \mathrm{d} S_t = S_t((\mu- \lambda k) \mathrm{d} t + \sigma  \mathrm{d} W_t + \mathrm{d} Q_t),
\end{align}
is a specific example of a L\'evy process, as it comprises both a continuous diffusion component and a jump component. According to the L\'evy–Itô decomposition~\citep[Proposition 3.7]{tankov2003financial}, any L\'evy process can be expressed as the sum of a deterministic drift term, a Brownian motion component, and a pure jump process, which is represented as a stochastic integral with respect to a Poisson random measure.

\subsection{Explicit Solution to MJD}
To derive the solution to MJD in~\cref{eq:MJD_SDE}, based on~\citep[Proposition 8.14]{tankov2003financial}, we first apply Itô's formula to the SDE:
\begin{align}
\mathrm{d}f(S_t , t ) &= \frac{\partial f(S_t , t )}{\partial t}\mathrm{d} t + b_t \frac{\partial f(S_t , t )}{\partial S_t}\mathrm{d} t + \frac{\omega_t^2}{2}\frac{\partial^2 f(S_t , t )}{\partial S_t^2}\mathrm{d}t+\omega_t\frac{\partial f(S_t , t )}{\partial S_t}\mathrm{d} W_t \nonumber\\
&+ [f(S_t)- f(S_{t-})],
\end{align}
where $b_t = (\mu-\lambda  k)S_t$, $\omega_t =\sigma S_t$, and $S_{t-}$ represents the value of $S$ before the jump at time $t$.

By setting the function $f(S_t, t) = \ln S_t$, the formula can be rearranged as:
\begin{align}
\mathrm{d} \ln S_t &= \frac{\partial \ln S_t}{\partial t}\mathrm{d} t + (\mu-\lambda  k)S_t \frac{\partial\ln S_t}{\partial S_t}\mathrm{d} t + \frac{\sigma^2S_t^2}{2}\frac{\partial^2 \ln S_t}{\partial S_t^2}\mathrm{d}t +\sigma S_t\frac{\partial \ln S_t}{\partial S_t}\mathrm{d} W_t \nonumber\\
&+ [\ln(S_t) - \ln(S_{t-})]\nonumber\\
&=  (\mu-\lambda  k)S_t \frac{1}{S_t}\mathrm{d} t + \frac{\sigma^2S_t^2}{2}(\frac{1}{-S_t^2})\mathrm{d}t +\sigma S_t(\frac{1}{S_t})\mathrm{d} W_t + [\ln(S_t) - \ln(S_{t-})]\nonumber\\
&=  (\mu-\lambda  k)\mathrm{d} t - \frac{\sigma^2}{2}\mathrm{d}t +\sigma \mathrm{d} W_t + [\ln(S_t) - \ln(S_{t-})] \label{eq:ito}
\end{align}
From the definition of the Compound Poisson process, we have that $S_{t} = Y_i S_{t^-}$, such that $\ln(S_{t}) - \ln(S_{t^-}) = \ln Y_i$.
Here, $Y_i$ is the magnitude of the multiplicative jump.
Therefore, integrating both sides of~\cref{eq:ito}, we get the final explicit solution for MJD model:
\begin{align}
\ln S_t - \ln S_0 = (\mu-\lambda  k- \frac{1}{2}\sigma^2)t + \sigma W_t+ \sum^{N_t}_{i=1}\ln Y_i.
\end{align}
We can reorganize the explicit solution as: 
\begin{align}
    S_t = S_0\exp\left(\left(\mu-\lambda  k- \frac{\sigma^2}{2}\right)t + \sigma W_t+ \sum^{N_t}_{i=1}\ln Y_i\right),
\end{align}
since the drift term, diffusion term and jump term are independent, we can derive the mean of $S_t$ conditional on $S_0$:
\begin{align}
    \E[S_t \vert S_0] =& S_0\E\left[\exp\left(\left(\mu-\lambda  k- \frac{\sigma^2}{2}\right)t + \sigma W_t+ \sum^{N_t}_{i=1}\ln Y_i\right)\right]\nonumber\\
    =& S_0\E\left[\exp\left(\left(\mu-\lambda  k- \frac{\sigma^2}{2}\right)t\right)\right]\cdot \E\left[\exp\left(\sigma W_t\right)\right]\cdot \E\left[\exp\left(\sum^{N_t}_{i=1}\ln Y_i\right)\right]\nonumber\\
    =& S_0\exp\left(\left(\mu-\lambda  k- \frac{\sigma^2}{2}\right)t\right) \cdot \exp \left(\frac{\sigma^2}{2}t\right) \cdot \exp \left(\lambda k t \right)\nonumber\\
    =& S_0\exp\left(\mu t\right)
\end{align}

\subsection{Likelihood Function of MJD}
For the log-likelihood derivation, given the conditional probability in~\cref{eq:MJD_cond_prob}, the log-likelihood of the MJD model can be expressed as:
\begin{align}
    \log P\left(\ln S_t \vert S_0\right) =& \log \sum_{n=0}^{\infty} P\left(N_t = n\right)  P\left(\ln S_t\vert S_0, N_t = n\right)  \nonumber\\
    =&\log \sum_{n=0}^{\infty} \exp\left({-\lambda t} \right)\frac{(\lambda t)^n}{n!} \frac{1}{\sqrt{2\pi b_n^{2}}}\exp \left(-\frac{(\ln S_t-a_n)^2}{2b_n^{2}}\right) \nonumber\\
    =&\log \sum_{n=0}^{\infty} \exp\left(-\lambda t+ \log \frac{\left(\lambda t\right)^n}{n!} + \log \frac{1}{\sqrt{2\pi b_n^2}}+ \left(-\frac{\left(\ln S_t-a_n\right)^2}{2b_n^2}\right)\right) \nonumber\\
    =& \log \sum_{n=0}^{\infty} \exp\left({-\lambda t} +n\log{\left(\lambda t\right)}-\log{n!\sqrt{2\pi }} - \frac{\log {b_n^2}}{2} -\frac{\left(\ln S_t-a_n\right)^2}{2b_n^2}\right),
\end{align}
where $a_n = \ln S_0 + \left(\mu -\lambda k -\frac{\sigma^2}{2}\right)t + n\nu$ and $b_n^2 = \sigma^2 t +\gamma^2n$. 
In maximum likelihood estimation (MLE), the initial asset price \( S_0 \) is assumed to be constant (non-learnable) and can therefore be excluded from optimization. The objective of MLE is to estimate the parameter set \( \Theta = \{\mu, \sigma, \lambda, \gamma, \nu \} \) by maximizing the likelihood of the observed data under the estimated parameters.
For the MJD model, the MLE objective is to determine the optimal parameters \( \hat{\Theta} \). By omitting constant terms and expanding \( s_n \) and \( a_n \), the final expression of the MLE objective can be simplified as:

\begin{align}
\hat{\Theta} &=\arg \max_{\Theta} \log P(\ln S_t|S_0) \nonumber\\
& = \arg \max_{\Theta}\log \sum_{n=0}^{\infty} \exp\Bigl({-\lambda t}  + n\log{\left(\lambda t\right)}- \frac{\log \left({\sigma^2 t +n\gamma^2}\right)}{2} \nonumber\\
&-\frac{(\ln S_t-\ln S_0 - \left(\mu-\frac{\sigma^2}{2} -\lambda k\right)t-n \nu)^2}{2 \left(\sigma^2 t +n\gamma^2\right)}\Bigl)
\end{align}

According to~\cite{cont2002calibration}, the Fourier transform can be applied to the Merton Jump Diffusion log-return density function.
The characteristic function is then given by:
\begin{align}
\phi_c(\omega) =&\int^{\infty}_{-\infty} \exp(i\omega x) P(x)\mathrm{d}x \nonumber\\
=& \exp\left[\lambda t\left\{\exp\left(i\omega\nu +\frac{\gamma^2\omega^2}{2}\right)-1\right\}+i\omega\left(\left(\mu- \frac{\sigma^2}{2}-\lambda k\right)t\right) - \frac{\sigma^2\omega^2}{2}t\right],
\end{align}
where $x = \ln \frac{S_t}{S_0}$.

With simplification $\phi_c(\omega) = \exp\left[t g(\omega)\right]$, we can find the characteristic exponent, namely, the cumulant generating function (CGF):
\begin{align}
    g(\omega) = \lambda\left\{\exp\left(i\omega\nu +\frac{\gamma^2\omega^2}{2}\right)-1\right\}+i\omega\left(\mu- \frac{\sigma^2}{2}-\lambda k\right) - \frac{\sigma^2\omega^2}{2},
\end{align}
where $k = \exp{(\nu + {\gamma^2}/{2})} - 1$.

The series expansion of CFG is:
\begin{align}
    g(\omega) = i\omega k_1 - \frac{\omega^2k_2}{2} + \frac{\omega^3k_3}{6}...
\end{align}
According to~\cite[Proposition 3.13]{tankov2003financial}, the cumulants of the L\'evy distribution increase linearly with $t$. Therefore, the first cumulant $k_1$ is the mean of the standard MJD:
\begin{align}\label{eq:cond_mean}
k_1(t) =\E\left[\ln \frac{S_t}{S_0}\right]=(\mu - \lambda k - \sigma^2/2 + \lambda \nu)t
\end{align}
The second cumulant $k_2$ is variance of the standard MJD, which is:
\begin{align}
    k_2(t)  = \text{Var}\left[\ln \frac{S_t}{S_0}\right] = \left(\sigma^2 + \lambda (\gamma^2 + \nu ^2)\right)t
\end{align}
The corresponding higher moments can also be calculated as:
\begin{align}
   \text{Skewness} = k_3(t)  =& \lambda(3\gamma^2 \nu + \nu^3)t\\
   \text{Excess Kurtosis} =k_4(t)  =& \lambda(3\gamma^4+6\nu^2\gamma^2 + \nu^4)t
\end{align}

\section{Derivations of the Non-stationary Merton Jump Diffusion Model}
\label{app:NSMJD_derivation}
\subsection{Non-stationary MJD and Additive Process}
\begin{definition}
    \textbf{Additive process} \citep[Definition 14.1]{tankov2003financial}
    A stochastic process \( (X_t)_{t\geq0} \) on \( \mathbb{R}^d \) is called an additive process if it is càdlàg, satisfies \( X_0 = 0 \), and has the following properties:
    \begin{enumerate}
        \item {Independent increments}: For every increasing sequence of times \( t_0, t_1, \dots, t_n \), the random variables 
        $
        X_{t_0}, X_{t_1} - X_{t_0}, \dots, X_{t_n} - X_{t_{n-1}}
        $
        are independent.
        \item {Stochastic continuity}: For all \( \varepsilon > 0 \),
        $
        \lim_{h \to 0} \mathbb{P}(|X_{t+h} - X_t| \geq \varepsilon) = 0.
        $
    \end{enumerate}
\end{definition}
In the non-stationary MJD model, given by,
\begin{align}
    \mathrm{d} S_t &= S_t((\mu_{t}- \lambda_{t} k_{t}) \mathrm{d} t + \sigma_{t}  \mathrm{d} W_t + \int_{\R^d} (y-1) N(\mathrm{d}t, \mathrm{d}y)),
\end{align}
the parameters governing drift, volatility, and jump intensity evolve over time, resulting in non-stationary increments. This violates the key stationarity property required for L\'evy processes, as discussed in~\cref{app:MJD_derivation}. Consequently, the non-stationary MJD no longer falls within the L\'evy process framework.
Instead, according to the definition above, a stochastic process with independent increments that follow a non-stationary distribution is classified as an additive process.  
Similar to the relationship between the stationary MJD and the L\'evy process, the non-stationary MJD can be viewed as a specific instance of an additive process.  
Thus, we can apply corresponding mathematical tools for additive processes to study the non-stationary MJD.

\subsection{Explicit Solution to Non-stationary MJD}

{To derive the explicit solution to the non-stationary MJD, according to~\cite[Proposition 8.19]{tankov2003financial}, we have the It\^o formula for semi-martingales:
\begin{align}
    f(t, X_t) - f(0, X_0) &= \int^t_0\frac{\partial f(s, X_s )}{\partial s}\mathrm{d} s +  \int^t_0\frac{\partial f(s , X_{s-} )}{\partial x}\mathrm{d} X_s + \frac{1}{2}\int^t_0\frac{\partial^2 f(s , X_{s-} )}{\partial x^2}\mathrm{d} [X,X]^c_s \nonumber\\
    &+ \sum_{0\leq s\leq t, \triangle X_s\neq 0}[f(s , X_s ) - f(s , X_{s-} ) - \triangle X_s \frac{\partial f(s , X_{s-} )}{\partial x}].
\end{align}
According to~\cite[Remark 8.3]{tankov2003financial}, for a function independent of time (\ie, $f(t , X_t ) = f(X_t )$), when we have finite number of jumps, we can rewrite the above equation as:
\begin{align}
    f(X_t) - f(X_0) &= \int^t_0f'(X_{s-} )\mathrm{d} X_s^c + \frac{1}{2}\int^t_0 f''(X_{s-} )\mathrm{d} [X,X]^c_s + \sum_{0\leq s\leq t, \triangle X_s\neq 0}[f(X_s ) - f(X_{s-} )],
    \nonumber
\end{align}
where $X_s^c$ is the continuous part of $X_s$, and $[X,X]^c_s$ is the continuous quadratic variation of $X$ over the interval $[0,s]$.

In our case, let $X_t = S_t$, and define $f(t,X_t)= \ln S_t$, the corresponding derivatives are $\frac{\partial f(t , X_t )}{\partial X_t} = \frac{\partial \ln S_t}{\partial S_t}= \frac{1}{S_t}$, and $\frac{\partial^2 f(t , X_t )}{\partial X_t^2} = \frac{1}{-S_t^2}$. 
The dynamics of non-stationary MJD is defined by:
\begin{align*}
    \mathrm{d} S_t &= S_t\left(\mu_{t} \mathrm{d} t -\lambda_t k_t\mathrm{d} t +\sigma_{t}  \mathrm{d} W_t + \int_{\R^d} (y-1) {N}(\mathrm{d}t, \mathrm{d}y)\right).
\end{align*}
The continuous part of the quadratic variation of $S_t$ is $\mathrm{d} [S,S]^c_s = S^2_t\sigma^2_t \mathrm{d}t$. A jump at time $s$ is modeled as a multiplicative change $S_s = y S_{s-}$. Thus, the jump contribution is $\sum_{0\leq s\leq t, \triangle X_s\neq 0}[f(X_s ) - f(X_{s-} )] =\sum_{0\leq s\leq t, \triangle X_s\neq 0}[\ln(yS_{s-}) - \ln(S_{s-})] = \sum_{0\leq s\leq t, \triangle X_s\neq 0}[\ln y]$. Since the jump process is driven by a Poisson random measure ${N}(\mathrm{d}t, \mathrm{d}y)$ on $[0,t] \times\R^d$, we can rewrite the sum over all jump times as an integral with respect to this measure. When there are finitely many jumps on $[0,t]$, we have $\sum_{0\leq s\leq t, \triangle X_s\neq 0}[\ln y] = \int_0^t \int_{\R^d} \ln y {N}(\mathrm{d}s, \mathrm{d} y)$.

Based on~\cite[Ch. 14]{tankov2003financial}, even when the parameters (drift, volatility, jump intensity, etc.) are time-dependent, the non-stationary MJD remains a semi-martingale. Therefore, we can simplify the equation as follows for time $t$:
\begin{align}
    \ln S_t - \ln S_0 &= \int^t_0\frac{1}{S_s}S_s\left(\mu_{s} -\lambda_s k_s \right)\mathrm{d}s - \frac{1}{2}\int^s_0\frac{1}{S_s^2}S_s^2\sigma_s^2 \mathrm{d}s +\int_0^t \sigma_s \mathrm{d}W_s \nonumber \\
    &+ \int_{[0, t] \times \R^d} 
     \ln y  N(\mathrm{d}s, \mathrm{d}y )
\end{align}
Therefore, the explicit solution is:
\begin{align}
    \ln \frac{S_{t}}{S_{0}} &= 
    \int_0^t (\mu_{s}-\lambda_{s} k_{s} - \frac{\sigma_{s}^2}{2})\mathrm{d}s 
    + \int_0^t \sigma_{s} \mathrm{d}W_s + 
    \int_0^t \int_{\R^d} \ln y {N}(\mathrm{d}s, \mathrm{d} y).
\end{align}
The only assumption needed for the derivation is the finite variation condition: $\int_0^t\int_{\R} \vert y\vert N(\mathrm{d}s, \mathrm{d} y) < \infty$.
Based on the explicit solution for $S_t$, we can easily compute the conditional expectations as,
\begin{align}
    &\mathbb{E}\left[\ln (S_t/ S_0) \right] = \int_0^t (\mu_{s}-\lambda_{s} k_{s} - \frac{\sigma_{s}^2}{2})\mathrm{d}s +\int_0^t \int_{\R^d} \ln y\lambda_{s} f_{Y}(s, y)\mathrm{d}y\mathrm{d}s.
\end{align}
and
\begin{align}
    &\mathbb{E}\left[S_t \vert S_0 \right] = S_0 \exp(\int_0^t\mu_s\mathrm{d}s),
\end{align}
The variance can also be calculated as,
\begin{align}
    &\text{Var}\left[\ln (S_t/ S_0) \right] = \int_0^t \sigma_{s}^2 \mathrm{d}s + \int_0^t \int_{\R^d} (\ln y)^2\lambda_{s} f_{Y}(s, y)\mathrm{d}y\mathrm{d}s.
\end{align}

Given the results for the general time-inhomogeneous system, one can directly substitute the coefficients into the discrete formulation implemented in~\cref{sect:method_learning} to obtain the corresponding results.}

\subsection{Likelihood Function of Non-stationary MJD}

{Let \( X_t = \ln S_t/\ln S_0, {t \geq 0} \) be the log-return of the asset price $S_t$. Under the non-stationary MJD settings, $X_t$ is an additive process, therefore by the general property of additive process \citep[Ch 14]{tankov2003financial}, the law of $X_t$ is infinitely divisible and its characteristic function is given by the Lévy–Khintchine formula:
\[
\mathbb{E}[\exp(i u \cdot X_t)] = \exp \psi_t (u),
\]
where
\begin{equation}
\psi_t (u) = -\frac{1}{2} u \cdot A_t u + i u \cdot \Gamma_t + \int_{\mathbb{R}^d} \eta (\mathrm{d} y) \left( e^{i u \cdot x} - 1 - i u \cdot x  \right),
\end{equation}
where we have the integrated volatility term $A_t = \int_0^t \sigma_s \mathrm{d}s$, the integrated drift term $\Gamma_t = \int_0^t (\mu_{s}-\lambda_{s} k_{s}) \mathrm{d}s$, and the Lévy measure $\eta(\mathrm{d} y) = \lambda_t f_Y(t, y)$.

Since the jumps follow a time-inhomogeneous Poisson random measure and the process is additive, we can denote the integrated intensity of jumps by $\Lambda(t) = \int_0^t \lambda_s \mathrm{d}s$, then the number of jumps $N_t$ in the time range $[0,t]$ is a Poisson distribution with this integrated jump intensity $\Lambda(t)$. When conditioning on $N_t$, we will have:
\begin{align}
    P(N_t = n) = \frac{\exp(-\Lambda(t))\Lambda(t)^n}{n!}
\end{align}

We now derive the conditional density $P(\ln S_t \vert N_t = n, S_0)$, and here we can start with the case of one jump. 
When there is exactly one jump in $[0, t]$, the jump time $s_1$ is random. Given jump time $s_1$, in a time‐inhomogeneous setting, the instantaneous probability of a jump at time $s_1$ is proportional to $\lambda_{s_1}$. According to the dynamics of non-stationary MJD, the continuous part of the log-return leads to a normal distribution with mean being $a_1 = \ln S_0 + \int^{s_1}_0 \left(\mu_s-\lambda_s k_s - \frac{\sigma_s^2}{2}\right)\mathrm{d}s + \nu_{s_1}$, and variance being $b^2_1 = \int_0^{s_1} \sigma_s^2 \mathrm{d}s + \gamma^2_{s_1}$. Thus, the conditional density of $\ln S_t$ given one jump at time $s_1$ is $\phi(\ln S_t; a_{1}, b_{1}^2)$, where $\phi(\cdot; a_{1}, b_{1}^2)$ denotes the Gaussian density with mean $a_1$ and variance $b_1^2$.

Since the jump could have occurred at any time in $[0, t]$, we must integrate over the possible jump time $s_1$. Therefore, the conditional density given $N_t = 1$ is:
\begin{align}
    P(\ln S_t \vert N_t = 1, S_0) = \frac{1}{\Lambda(t)}\int_0^t \lambda_{s_1} \phi(\ln S_t; a_{1}, b_{1}^2) \, \mathrm{d}s_1 ,
\end{align}
where $\frac{1}{\Lambda(t)}$ normalizes the density.

When generalizing to the case of $N^t = n$, the conditional density $P(\ln S_t \vert N_t = n, S_0)$ is defined via an integration over the $n$ jump times, with the jump times denoted by $0\leq s_1, ...,s_n\leq t$. 

Because the process is time-inhomogeneous, the probability density that a jump occurs at a specific time $s_i$ is given by the instantaneous rate $\lambda_{s_i}$, therefore for a given set of jump times, the joint density for the jumps is proportional to $\Pi_{i=1}^n \lambda_{s_i}$. The conditional density can be written as:
\begin{align}
    P(\ln S_t \vert N_t = n, S_0) = \frac{1}{\Lambda(t)^n}\int\cdots\int_{[0, t]} \Pi_{i=1}^n \lambda_{s_i} \phi(\ln S_t; a_{n}, b_{n}^2) \, \mathrm{d}s_1 \cdots \mathrm{d}s_n  
\end{align}
Here $\phi(\ln S_t; a_{n}, b_{n}^2)$ is the density of a normal distribution with mean $a_{n}$ and variance $b_{n}^2$, which are defined by:
\begin{align}
    a_{n} &= \ln S_0 + \int_0^t \left(\mu_s -\lambda_s k_s - \frac{\sigma_s^2}{2} \right) \mathrm{d}s + \sum_{i=1}^n \nu_{s_i}    \nonumber \\
    b_{n}^2 &= \int_0^t\sigma_s^2 \mathrm{d}t + \sum_{i=1}^n \gamma_{s_i}^2  \nonumber
\end{align}
For convenience, we may write the mixture term as
\begin{align}
    \Phi_n &= \int\cdots\int_{[0, t]} \Pi_{i=1}^n \lambda_{s_i} \phi(\ln S_t; a_{n}, b_{n}^2) \, \mathrm{d}s_1 \cdots \mathrm{d}s_n  
\end{align}
Therefore, for the time-varying SDEs, the conditional probability of $\ln S_{t}$ is given by,
\begin{align}
    P\left(\ln S_{t} \vert S_{0}\right) = \sum_{n=0}^{\infty}\frac{\exp(-\Lambda(t))}{n!} \Phi_n. 
\end{align}
}

\newpage
\section{Proofs of Theorem and Proposition}
\subsection{Proof of\texorpdfstring{~\cref{the:truncation_error}}.}
\label{app:truncation_error}
\truncationError*
Before diving into the proof, we first introduce two important lemmas.
\begin{lemma}[Theorem 2 in~\cite{short2013improved}]
\label{lem:poisson_cdf_bound}
Let \( Y \sim \poisson(m) \) be a Poisson-distributed random variable with mean \( m \). Its distribution function is defined as  
$
P(Y\leq k) \coloneqq \exp(-m)\sum_{i=0}^{k} \frac{m^i}{i!},
$
with integer support \( k \in \{0,1,\dots,\infty\} \). For \( k = 0 \) and \( k = \infty \), one has  
$
P(Y \leq 0) = \exp(-m), P(Y \leq \infty) = 1.
$  
For every other \( k \in \{1,2,3,\dots\} \), the following inequalities hold:  
\[
\Phi \left( \mathrm{sign}(k - m) \sqrt{2H(m,k)} \right)
< P(Y \leq k)
< \Phi \left( \mathrm{sign}(k + 1 - m) \sqrt{2H(m,k+1)} \right),
\]  
where \( H(m,k) \) is the Kullback-Leibler (KL) divergence between two Poisson-distributed random variables with respective means \( m \) and \( k \):  
\[
H(m,k) = D_{\text{KL}}\left(\poisson(m) \Vert \poisson(k) \right) = m - k + k \ln \left( \frac{k}{m} \right).
\]
And $\Phi(x)$ is the cumulative distribution function (CDF) of the standard normal distribution and $\mathrm{sign}(\cdot)$ is the signum function.
\end{lemma}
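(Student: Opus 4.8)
The plan is to prove the two-sided bound by first recasting it in a more symmetric, \emph{consecutive} form and then reducing the positivity of a one-dimensional difference function to an elementary calculus inequality. Write $F(k):=P(Y\le k)$ and $r(k):=\sign(k-m)\sqrt{2H(m,k)}$. I claim the theorem is equivalent to
\[
F(k-1) < \Phi\bigl(r(k)\bigr) < F(k) \qquad (k\ge 1),
\]
because the right inequality is exactly the stated lower bound $\Phi(r(k))<F(k)$, while the left inequality, after the shift $k\mapsto k+1$, is precisely the stated upper bound $F(k)<\Phi(r(k+1))$. This reformulation is what I would actually establish, treating $m$ as the free variable with $k$ fixed.

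The engine is to study the two difference functions $D(m):=F(k)-\Phi(r(k))$ and $E(m):=\Phi(r(k))-F(k-1)$ through three facts. First, the Poisson CDF telescopes to $\tfrac{d}{dm}F(k)=-\,m^k e^{-m}/k!$. Second, since $r(k)^2=2H(m,k)$, the Gaussian density collapses to a Poisson-type term, $\phi(r(k))=\tfrac{1}{\sqrt{2\pi}}e^{-H(m,k)}=\tfrac{1}{\sqrt{2\pi}}e^{-m}e^{k}m^k/k^k$, and the chain rule gives $\tfrac{d}{dm}\Phi(r(k))=\phi(r(k))\,(1-k/m)/r(k)<0$. Third, both endpoints vanish: as $m\to0^+$ one has $F(k),F(k-1)\to1$ and $r(k)\to+\infty$, so $\Phi(r(k))\to1$, while as $m\to\infty$ all three tend to $0$; hence $D(0^+)=D(\infty)=E(0^+)=E(\infty)=0$. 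It therefore suffices to show each difference is strictly positive on the interior, which I would obtain by proving it is unimodal via a single sign change of its derivative.

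For $D$, factoring out the positive quantity $m^k e^{-m}$ reduces $\sign(D')$ to the sign of $B(m):=-\tfrac1{k!}+c_k\,|k/m-1|/\sqrt{2H(m,k)}$ with $c_k:=e^k/(\sqrt{2\pi}\,k^k)$. Substituting $u:=k/m$ and using the one-sided Stirling identity $c_k\,k!=\sqrt{k}\,e^{\theta_k}$ with $\theta_k>0$, positivity of $B$ follows once $k e^{2\theta_k}(u-1)^2>2H=2k\bigl(1/u-1+\ln u\bigr)$. On the region $m<k$ (i.e.\ $u>1$) this is implied by the clean inequality $(u-1)^2\ge 2(1/u-1+\ln u)$, which holds because $\psi(u):=(u-1)^2-2(1/u-1+\ln u)$ satisfies $\psi(1)=0$ and $\psi'(u)=2(u-1)^2(u+1)/u^2\ge0$; the extra factor $e^{2\theta_k}>1$ upgrades this to strict positivity. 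Thus $D$ is strictly increasing on $(0,k)$, so $D>0$ there, and in particular $D(k)>0$.

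The main obstacle is the far region $m>k$ (where $u<1$ makes $\psi$ change sign, so $B$ genuinely may become negative) and the analogous analysis for the upper bound $E$. Here I cannot rely on pointwise dominance; instead I would prove unimodality by showing the ratio $|k/m-1|/\sqrt{2H(m,k)}$ is strictly decreasing on $(k,\infty)$, which forces $B$ to cross zero at most once (from $+$ to $-$) and, together with $D(k)>0$ and $D(\infty)=0$, keeps $D>0$ throughout. The upper bound $E$ is the most delicate: its derivative $E'(m)=m^{k-1}e^{-m}\,\tilde B(m)$ has $\tilde B>0$ near $m=0$ and at $m=k$ but $\tilde B\to-\infty$ as $m\to\infty$, with $E(k)=\tfrac12-F(k-1)>0$ (equivalently, the median of $\poisson(k)$ is at least $k$), so the sign change of $\tilde B$ is strictly interior to $(k,\infty)$. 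Converting this qualitative picture into a rigorous single-crossing statement—via the same $u=k/m$ substitution, monotonicity of the relevant ratio, and two-sided Stirling bounds on $c_k\,k!$ to pin down the constant—is the technical heart of the argument; once it is in place, unimodality and the vanishing endpoints force $E>0$, completing the proof.
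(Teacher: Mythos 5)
The paper itself offers no proof of this lemma: it is imported verbatim as Theorem~2 of Short (2013), with a nod to the earlier work of Zubkov and Serov, so any blind attempt is necessarily a self-contained derivation rather than a reconstruction. Much of your scaffolding is sound and in fact mirrors how the result is proved in that literature. The consecutive reformulation $F(k-1)<\Phi(r(k))<F(k)$ is exactly the Zubkov--Serov form of the statement; your derivative computations are correct ($\frac{d}{dm}F(k)=-e^{-m}m^k/k!$, the collapse $\phi(r(k))=\frac{1}{\sqrt{2\pi}}e^{-m}e^{k}m^{k}/k^{k}$, and $\frac{d}{dm}\Phi(r(k))=\phi(r(k))(1-k/m)/r(k)<0$); the endpoint limits vanish as claimed; and your treatment of $D$ on $(0,k)$ is complete and rigorous --- the identity $\psi'(u)=2(u-1)^2(u+1)/u^2$ checks out, and $c_k\,k!=\sqrt{k}\,e^{\theta_k}$ with $\theta_k>0$ correctly upgrades $\psi\ge 0$ to strict positivity of $B$ there.

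There are, however, two genuine gaps. First, the proof is not finished: the single-crossing statement --- equivalently, monotonicity of $u\mapsto(1-u)^2/\bigl(1/u-1+\ln u\bigr)$ on $(0,1)$, which governs the ratio $|k/m-1|/\sqrt{2H(m,k)}$ --- is asserted and explicitly deferred as ``the technical heart,'' yet it carries the entire burden of the lower bound on $m>k$ and the whole of the upper bound. That lemma is precisely the nontrivial content of the theorem, not a routine appendix, so as written the argument is a sketch. Second, one of your anchor claims is false: $\tilde B(k)>0$ fails. Since $|k-m|/\sqrt{2H(m,k)}\to\sqrt{k}$ as $m\to k$ (Taylor: $H\approx(m-k)^2/(2k)$), your own Stirling identity gives $\tilde B(k)=\frac{1}{(k-1)!}-c_k\sqrt{k}=\frac{1-e^{\theta_k}}{(k-1)!}<0$, so the sign change of $\tilde B$ lies in $(0,k)$, not ``strictly interior to $(k,\infty)$.'' This is exactly the regime where the inequality is tight up to the factor $e^{\theta_k}=1+O(1/k)$, i.e.\ where qualitative reasoning is most fragile. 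The global scheme can still be repaired --- a single crossing of $\tilde B$ on all of $(0,\infty)$, combined with the vanishing endpoints, forces $E>0$, and then $E(k)=\Phi(0)-F(k-1)=\tfrac12-P(\poisson(k)\le k-1)>0$ emerges as a consequence rather than an input --- but as it stands the qualitative picture for $E$ is wrong and the decisive monotonicity lemma is missing.
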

\cref{lem:poisson_cdf_bound} is particularly helpful in our proof below. We also acknowledge its foundation in an earlier work~\cite{zubkov2013complete}, which provides many insights and a profound amount of valuable knowledge on its own.

\begin{lemma}[Bounds on the Standard Normal CDF]
\label{lem:gaussian_cdf_bound}
 The following upper bound for $\Phi(\cdot)$ holds when $x < 0$:
$$ \Phi(x) < \frac{\phi(x)}{|x|}, $$
where $\phi(x) = \frac{\exp(-x^2/2)}{\sqrt{2 \pi}}$ is the probability density function of the standard normal distribution.
\end{lemma}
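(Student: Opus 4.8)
The plan is to prove the bound directly from the definition $\Phi(x) = \int_{-\infty}^x \phi(t)\,\mathrm{d}t$ by exploiting the identity $\phi'(t) = -t\,\phi(t)$, which lets the Gaussian density integrate in closed form once it is weighted by a factor of $t$. The entire argument rests on comparing the integrand $\phi(t)$ against $\frac{t}{x}\phi(t)$ on the integration domain $(-\infty, x)$.

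First I would fix $x < 0$ and observe that for every $t$ with $t < x < 0$ one has $|t| > |x|$, hence $\frac{t}{x} = \frac{|t|}{|x|} > 1$ (both $t$ and $x$ are negative, so their ratio is positive and strictly exceeds one, with equality only at $t = x$). Since $\phi(t) > 0$, this yields the pointwise bound $\phi(t) < \frac{t}{x}\phi(t)$ at every point of $(-\infty, x)$.

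Next I would integrate this inequality over $(-\infty, x)$ to obtain
\[
    \Phi(x) = \int_{-\infty}^x \phi(t)\,\mathrm{d}t < \frac{1}{x}\int_{-\infty}^x t\,\phi(t)\,\mathrm{d}t,
\]
where the strictness survives integration because the integrands differ strictly on a set of full measure. The remaining integral is elementary: since $\frac{\mathrm{d}}{\mathrm{d}t}\phi(t) = -t\,\phi(t)$, the antiderivative of $t\,\phi(t)$ is $-\phi(t)$, so $\int_{-\infty}^x t\,\phi(t)\,\mathrm{d}t = -\phi(x)$, the lower limit contributing $0$ because $\phi(t) \to 0$ as $t \to -\infty$. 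Substituting gives $\Phi(x) < -\phi(x)/x$, and since $x < 0$ we have $-1/x = 1/|x| > 0$, which is exactly $\Phi(x) < \phi(x)/|x|$.

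The argument has essentially no hard step; the only points requiring care are the sign bookkeeping (because $x < 0$, multiplying the integrand by the \emph{positive} quantity $t/x$ preserves the inequality direction, and the factor $1/x$ later combines with $-\phi(x)$ to produce a positive right-hand side) and confirming that the inequality remains strict after integration, which is immediate since $\frac{t}{x} > 1$ strictly for all $t < x$. As a cross-check I would note the alternative route via the symmetries $\Phi(x) = 1 - \Phi(-x)$ and $\phi(x) = \phi(-x)$, which reduces the claim to the classical upper Mill's-ratio bound $1 - \Phi(u) < \phi(u)/u$ for $u = -x > 0$; I would carry out the direct computation above as the primary proof and mention this symmetry reduction only as a one-line verification.
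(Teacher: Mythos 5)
Your proof is correct, and it takes a genuinely different (more self-contained) route than the paper's. The paper's proof is essentially your closing ``cross-check'': it cites the classical Mills'-ratio inequality $1 - \Phi(u) < \phi(u)/u$ for $u > 0$ from \cite{gordon1941values} and transports it to the negative tail via $\Phi(-u) = 1 - \Phi(u)$ and $\phi(-u) = \phi(u)$, so the symmetry reduction you relegate to a one-line remark is the paper's entire argument. What you do instead is reprove the tail bound from first principles: the pointwise comparison $\phi(t) < \tfrac{t}{x}\,\phi(t)$ on $(-\infty, x)$, valid because $t/x > 1$ when $t < x < 0$, followed by the closed-form evaluation $\int_{-\infty}^{x} t\,\phi(t)\,\mathrm{d}t = -\phi(x)$ using $\phi'(t) = -t\,\phi(t)$ and $\phi(t) \to 0$ as $t \to -\infty$. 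This is exactly the standard proof of the cited Mills'-ratio bound, applied directly to the lower tail rather than invoked as a black box. Your sign bookkeeping is right (the factor $t/x$ is positive so the inequality direction is preserved, and $-\phi(x)/x = \phi(x)/|x| > 0$ for $x < 0$), and your justification that strictness survives integration is sound, since the pointwise inequality is strict everywhere except at the single endpoint $t = x$. What each approach buys: yours makes the lemma independent of any external reference and makes the source of strictness explicit; the paper's is shorter but outsources the analytic work to the literature. Either is acceptable for the role the lemma plays in bounding the truncation error in Theorem \ref{the:truncation_error}.
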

\begin{proof}
By the Mills' ratio inequality for the Gaussian distribution~\cite{gordon1941values}, we have
$ 1 - \Phi(x) < \frac{\phi(x)}{x}, \forall x > 0.$
Using the identity $\Phi(-x) = 1 - \Phi(x)$ for $x > 0$, we immediately obtain:
$\Phi(-x) < \frac{\phi(x)}{x}, \forall x > 0.$
For $x < 0$, substituting $-x$ into the previous bound and noting that $\phi(-x) = \phi(x)$, we obtain
$\Phi(x) < \frac{\phi(x)}{|x|}, \forall x < 0.$
\end{proof}

\noindent\textbf{Proof of~\cref{the:truncation_error}.}
\begin{proof}
The original likelihood objective in~\cref{eq:NSMJD_one_step_prob_relaxed} is as follows:
\begin{equation}
    \begin{aligned}
        &P\left(\ln S_{t+\delta} \vert S_{t}, \gC\right) = \sum_{n=0}^{\infty} P\left(\Delta N = n\right)  P\left(\ln S_{t+\delta} \vert S_{t}, \gC, \Delta N = n\right) \\
        &=\sum_{n=0}^{\infty} \exp\left({-\lambda_{\rho_t} \delta} \right)\frac{(\lambda_{\rho_t} \delta)^n}{n!} \phi \left(\ln S_{t+\delta}; a_{n,\delta}, b^2_{n,\delta} \right) \\
        &=\sum_{n=0}^{\infty} \exp\left({-\lambda_{\rho_t} \delta} \right)\frac{(\lambda_{\rho_t} \delta)^n}{n!} \frac{1}{\sqrt{2\pi b_{n,\delta}^{2}}}\exp \left(-\frac{(\ln S_{t+\delta}-a_{n, \delta})^2}{2b_{n,\delta}^{2}}\right) 
    \end{aligned}
\end{equation}
where $\delta$ is a small time change so that $\rho_t -1 \leq t < t+\delta < \rho_t$, $a_{n, \delta} = \ln S_{t} + (\mu_{\rho_t} -\lambda_{\rho_t} k_{\rho_t} -{\sigma_{\rho_t}^2}/{2})\delta + n\nu_{\rho_t}$ and $s_{n,\delta}^2 = \sigma^2_{\rho_t} \delta +\gamma^2_{\rho_t}n$.

We define the truncation error with a threshold $\kappa$ as:
\begin{align}
\Psi_\kappa(t, \delta) \coloneqq \sum_{n=\kappa+1}^{\infty} P\left(\Delta N = n\right)  P\left(\ln S_{t+\delta} \mid S_{t}, \gC, \Delta N = n\right).
\end{align}
The second term $P\left(\Delta N = n\right)  P\left(\ln S_{t+\delta} \mid S_{t}, \gC, \Delta N = n\right)$ is a Gaussian density function and upper bounded by $\frac{1}{\sqrt{2\pi b_{n,\delta}^2}}$, so the truncation error $\Psi_\kappa(t, \delta)$ is bounded by:
\begin{align*}
\Psi_\kappa(t, \delta) &\leq \sum_{n=\kappa+1}^{\infty} 
\frac{1}{\sqrt{2\pi b_{n, \delta}^2}}
P\left(\Delta N = n\right) \tag{Gaussian density bound}\\
& \leq \frac{1}{\sqrt{2\pi b_{\kappa+1, \delta}^2}} \sum_{n=\kappa+1}^{\infty} 
P\left(\Delta N = n\right) \tag{$b_{\kappa, \delta}$ increases as $\kappa$ goes up}\\
& = \frac{1}{\sqrt{2\pi b_{\kappa+1, \delta}^2}} ( 1 - \sum_{n=0}^{\kappa} 
P\left(\Delta N = n\right) ) \tag{property of Poisson CDF}\\
& < \frac{1}{\sqrt{2\pi b_{\kappa+1, \delta}^2}} 
\left( 1 - \Phi (\mathrm{sign}(\kappa - \lambda_{\rho_t} \delta)
\sqrt{2 D_{\text{KL}}(\poisson(\lambda_{\rho_t} \delta) \Vert \poisson(\kappa))}) \right)  \tag{\cref{lem:poisson_cdf_bound}} \\
& = \frac{1}{\sqrt{2\pi b_{\kappa+1, \delta}^2}} 
\Phi (\mathrm{sign}(\lambda_{t} \delta - \kappa)
\sqrt{2 D_{\text{KL}}(\poisson(\lambda_{\rho_t} \delta) \Vert \poisson(\kappa))}) \tag{Gaussian CDF}
\end{align*}
As stated above, the KL divergence between two Poisson distributions follows
\begin{align*}
    D_{\text{KL}} \left(\poisson(a) \Vert \poisson(b) \right) = a - b + b\ln(\frac{b}{a})
\end{align*}
Therefore,
\begin{align*}
\Psi_\kappa(t, \delta) 
& < \frac{1}{\sqrt{2\pi b_{\kappa+1, \delta}^2}} 
\Phi \left(
\mathrm{sign}(\lambda_{\rho_t} \delta - \kappa)
\sqrt{2 D_{\text{KL}}(\poisson(\lambda_{\rho_t} \delta) \Vert \poisson(\kappa))}\right), \\
& = \frac{1}{\sqrt{2\pi (\sigma^2_{\rho_t} \delta +\gamma^2_{\rho_t}(\kappa+1))}} 
\Phi \left( 
\mathrm{sign}(\lambda_{\rho_t} \delta - \kappa)
\sqrt{2 (\lambda_{\rho_t} \delta - \kappa + \kappa\ln(\frac{\kappa}{\lambda_{\rho_t} \delta}))} \right) \\
& = \frac{1}{\sqrt{2\pi (\sigma^2_{\rho_t} \delta +\gamma^2_{\rho_t}(\kappa+1))}} 
\Phi \left( 
\mathrm{sign}(\frac{\lambda_{\rho_t} \delta}{\kappa} - 1)
\sqrt{2 (\lambda_{\rho_t} \delta - \kappa - \kappa\ln(\frac{\lambda_{\rho_t} \delta}{\kappa}))} \right)
\end{align*}

Intuitively, the truncation error decreases to zero as \( \kappa \) approaches infinity.
Below, we analyze the convergence rate.
When \( \kappa \) is sufficiently large, the term 
\(\mathrm{sign} \left(\frac{\lambda_{\rho_t} \delta}{\kappa} - 1\right)\)
is negative.
Consequently, the upper bound becomes:
\begin{align*}
\Psi_\kappa(t, \delta) 
& < \frac{1}{\sqrt{2\pi (\sigma^2_{\rho_t} \delta +\gamma^2_{\rho_t}(\kappa+1))}} 
\Phi \left( 
-\sqrt{2 (\lambda_{\rho_t} \delta - \kappa - \kappa\ln(\frac{\lambda_{\rho_t} \delta}{\kappa}))} \right)
\\
& < \frac{1}{\sqrt{2\pi (\sigma^2_{\rho_t} \delta +\gamma^2_{\rho_t}(\kappa+1))}} 
\frac{\phi\left(-\sqrt{2 (\lambda_{\rho_t} \delta - \kappa - \kappa\ln(\frac{\lambda_{\rho_t} \delta}{\kappa})}\right)}{\sqrt{2 (\lambda_{\rho_t} \delta - \kappa - \kappa\ln(\frac{\lambda_{\rho_t} \delta}{\kappa}))}} \tag{\cref{lem:gaussian_cdf_bound}}
\\
& = \frac{\exp{(-\lambda_{\rho_t} \delta + \kappa + \kappa\ln(\frac{\lambda_{\rho_t} \delta}{\kappa}))}}
{2\pi \sqrt{2 (\sigma^2_{\rho_t} \delta +\gamma^2_{\rho_t}(\kappa+1)) (\lambda_{\rho_t} \delta - \kappa - \kappa\ln(\frac{\lambda_{\rho_t} \delta}{\kappa}))}}
\end{align*}
As \(\kappa \rightarrow \infty\), the numerator is dominated by \(\exp(-\kappa \ln \kappa)\), which decays {super-exponentially} (faster than any polynomial or exponential decay).  
The denominator consists of two components:
\begin{itemize}
    \item The first term, \(\sqrt{\sigma^2_{\rho_t} \delta + \gamma^2_{\rho_t}(\kappa+1)}\), scales asymptotically as \(\gamma_{\rho_t}\sqrt{\kappa}\).
    \item The second term, \(\sqrt{\left(\lambda_{\rho_t} \delta - \kappa - \kappa\ln\left(\frac{\lambda_{\rho_t} \delta}{\kappa}\right)\right)}\), scales as \(\sqrt{\kappa \ln \kappa}\).
\end{itemize}
Combining all terms, the upper bound scales as:
\[
\frac{1}{2\pi \sqrt{2} \gamma_{\rho_t}} \cdot \frac{\exp(-\kappa \ln \kappa)}{\kappa \sqrt{\ln \kappa}} \sim \frac{\kappa^{-\kappa}}{\kappa \sqrt{\ln \kappa}}.
\]
The term \(\kappa^{-\kappa}\) decays super-exponentially, while the denominator grows algebraically (as \(\kappa \sqrt{\ln \kappa}\)). The rapid decay of \(\kappa^{-\kappa}\) dominates the polynomial growth in the denominator. The overall convergence rate is super-exponentially fast, at the rate of \(O(\exp(-\kappa \ln \kappa))\) or equivalently \(O(\kappa^{-\kappa})\).

Since the upper bound of \(\Psi_\kappa(t, \delta)\) decays at the rate of \(O(\kappa^{-\kappa})\) and \(\Psi_\kappa(t, \delta)\) is strictly positive, this implies that the original quantity \(\Psi_\kappa(t, \delta)\) must decay at least as fast as the upper bound. This completes the proof.

\end{proof}

\subsection{Proof of\texorpdfstring{~\cref{pro:euler_error}}.}
\label{app:euler_error}
\eulerError*
\begin{proof}
Here, we prove that this restart strategy has a tighter weak-convergence error than the standard EM solver.
Recall that we let $\epsilon_t \coloneqq \vert \E[g(\Bar{S}_t)] - \E[g(S_t)] \vert$ be the standard weak convergence error~\cite{kloeden1992stochastic}, where \(S_t\) is the ground truth state, \(\Bar{S}_t\) is the estimated one using certain sampling scheme and $g$ is a $K$-Lipschitz continuous function.
We denote the weak convergence errors of our restarted solver and the standard EM solver by $\epsilon^R_t$ and $\epsilon^E_t$, respectively.

\subsection*{Step 1: Standard EM Results on Time-Homogeneous MJD SDEs}
Early works on jump-diffusion SDE simulations explored the weak error bounds, which we summarize as follows.
For time-homogeneous MJD SDEs, the error term $\epsilon_t^E$ of the standard EM method is dominated by $\epsilon_t^E \leq K \exp(Lt) /M$. This is supported by the following: (a) Theorem 2.2 in \cite{protter1997euler} establishes the $O(1/M)$ rate; (b) Sec. 4-5 of \cite{protter1997euler} and Theorem 2.1 of \cite{bichteler1981stochastic} shows that $\epsilon_t^E$ grows exponentially regarding time with a big-$O$ factor $O(e^{K_p(t)})$.
In particular, the time-dependent term in the error bound $e^{K_p(t)}$ used in the proof of \cite{protter1997euler} is rooted in their Lemma 4.1, which can be proven in a more general setting in \cite{bichteler1981stochastic}; \eg, Eq. (2.16) in \cite{bichteler1981stochastic} discusses concrete forms of $K_p(t)$, which can be absorbed into $O(e^{Lt})$ for some constant $L>0$.
Lastly, the $K$-Lipschitz condition of the function $g$ provides the coefficient $K$ in the bound. For a detailed proof—which is more involved and not central to the design and uniqueness of our algorithm—we refer the reader to~\cite{kloeden1992stochastic,quarteroni2006numerical}.
When combining the above existing results from the literature, we can derive the error bound of \(\epsilon_t^E \leq K \exp(Lt) /M\).

\subsection*{Step 2: Standard EM Results on Time-Inhomogeneous MJD SDEs}
Our paper considers time-inhomogeneous MJD SDEs, with parameters fixed within each interval $[\tau-1, \tau)$ ($\tau \in \mathbb{N}$, $\tau \geq 1$). This happens to align with the Euler-Peano scheme for general time-inhomogeneous SDEs approximation.
As a specific case of time-varying Lévy processes, our MJD SDEs retain the same big-$O$ bounds as the time-homogeneous case.
Namely, the standard EM solver has the same weak convergence error \(\epsilon_t^E \leq K \exp(Lt) /M\), as in the time-homogeneous MJD SDEs.
This can be justified by extending Section 5 of \cite{protter1997euler} that originally proves the EM's weak convergence for time-homogeneous Lévy processes.
Specifically, the core technique lies in the Lemma 4.1 of \cite{protter1997euler}, which, based on \cite{bichteler1981stochastic}, is applicable to both time-homogeneous and Euler-Peano-style inhomogeneous settings (see Remark 3.3.3 in \cite{bichteler1981stochastic}).
Therefore, equivalent weak convergence bounds could be attained by extending Lemma 4.1 of \cite{protter1997euler} with proofs from \cite{bichteler1981stochastic} thanks to the Euler-Peano formulation.

\subsection*{Step 3: Our Restarted EM Solver Error Bound}
We now discuss the error bound for the restarted EM solver, $\epsilon_t^R$.
Thanks to explicit solutions for future states $\{S_1, S_2, \ldots, S_{T_f}\}$, we can analytically compute their mean $\mathbb{E}[S_\tau \vert \mathcal{C}]$, $\tau \geq 1$, based on~\cref{eq:NSMJD_cond_mean_relaxed}, which greatly simplifies the analysis.
Using the restart mechanism in line 10 of~\cref{alg:NSMJD_restarted_euler}, we ensure that $\mathbb{E}[\bar{S}_\tau\vert \mathcal{C}]$ from our restarted EM solver closely approximates the true $\mathbb{E}[S_\tau\vert \mathcal{C}]$ at restarting times. $\epsilon_t^R$ is significantly reduced when restart happens (when $t$ is an integer in our context for simplicity), then it grows again at the same rate as the standard EM method until the next restart timestep. This explains the $O(e^{t-\lfloor t \rfloor})$ difference in the error bounds of $\epsilon_t^R$ and $\epsilon_t^E$, where $\lfloor t \rfloor$ is the last restart time. Note that we could make the restart timing more flexible to potentially achieve a tighter bound in terms of weak convergence. However, this may affect the diversity of the simulation results, as the fidelity of path stochasticity could be impacted.
\end{proof}

\section{Experiment Details}
\label{app:exp}
\begin{figure}
    \centering
    \includegraphics[width=0.75\linewidth]{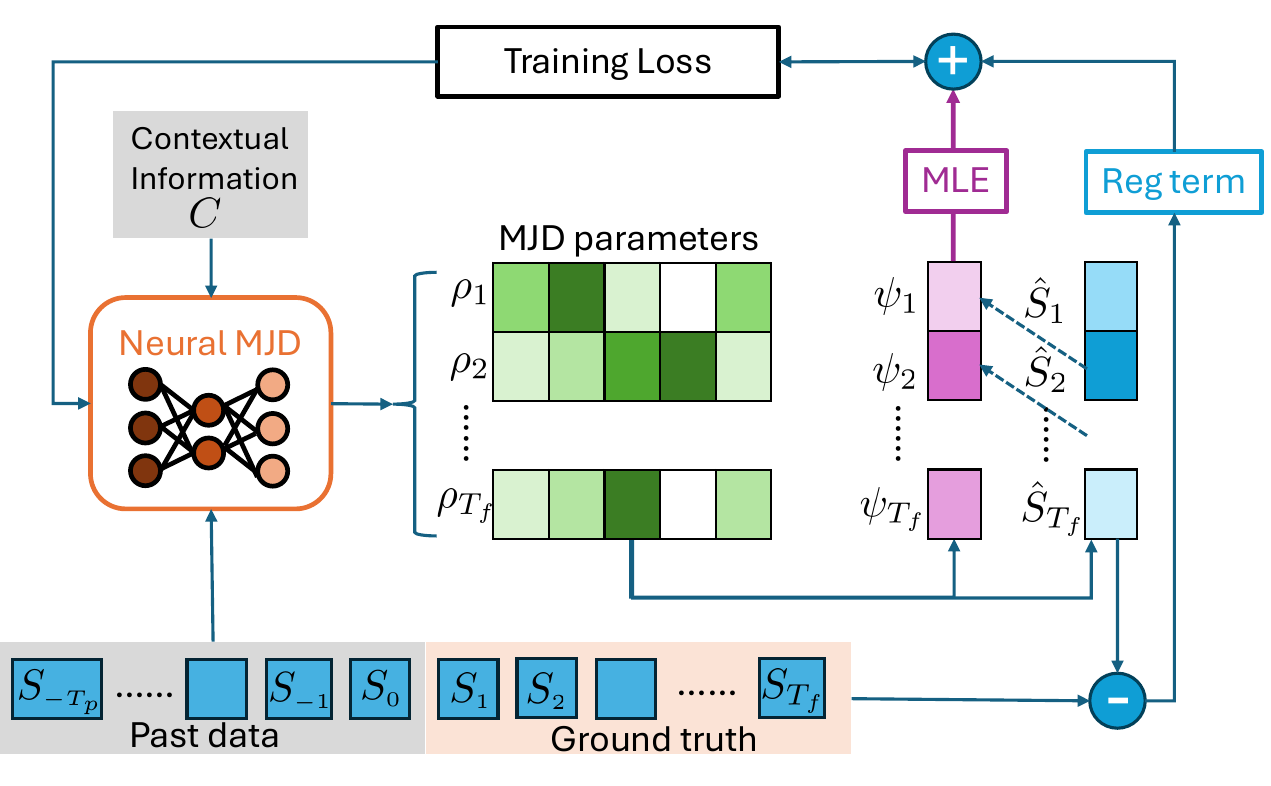}
    \caption{\OurModel training pipeline. The symbol $\rho$ represents the MJD parameters $\{\mu_\tau, \sigma_\tau, \lambda_\tau, \nu_\tau, \gamma_\tau\}$ in our model.}
    \label{fig:neuralMJD_loss}
\end{figure}


\subsection{Baseline, Model Architecture, and Experiment Settings}
\label{app:exp_model_arch}
For the statistical BS and MJD baselines, we assume a stationary process and estimate the parameters using a numerical MLE objective based on past sequences.  
For the other deep learning baselines, including DDPM, EDM, FM, Neural BS, and \OurModel, we implement our network using the standard Transformer architecture~\cite{vaswani2017attention}. 
All baseline methods are based on the open-source code released by their authors, with minor modifications to adapt to our datasets.
Note that the technical term \textit{diffusion} in the context of SDE modeling (\eg, Merton jump diffusion) should not be conflated with diffusion-based generative models~\cite{ho2020denoising}. While both involve SDE-based representations of data, their problem formulations and learning objectives differ significantly.

We illustrate the training loss computation pipeline for \OurModel~in \cref{fig:neuralMJD_loss}. Notably, the loss computation can be processed in parallel across the future time-step horizon, eliminating the need for recursive steps during training.
We normalize the raw data into the range of $[0, 1]$ for stability and use a regularization weight $\omega = 1.0$ during training.
All experiments were run on NVIDIA A40 and A100 GPUs (48 GB and 80 GB VRAM, respectively). 

\subsection{Datasets Details}
\label{app:exp_dataset}
For all datasets, we normalize the input data using statistics computed from the training set. For non-denoising models, normalization maps the data to the range \([0,1]\). In contrast, for denoising models (DDPM, EDM, FM), we scale the data to \([-1,1]\) to align with standard settings used in image generation. Importantly, normalization coefficients are derived solely from the training set statistics. Further details on this process are provided below.

\noindent\textbf{Synthetic Data.}
We generate synthetic data using a scalar Merton Jump Diffusion model. The dataset consists of \( N = 10,000 \) paths over the interval \([0,1]\), simulated using the Euler scheme with 100 time steps. To facilitate time-series forecasting, we employ a sliding window approach with a stride of 1, where the model predicts the next 10 frames based on the previous 10. The dataset is divided into 60\% training, 20\% validation, and 20\% testing.  
For each simulation, model parameters are randomly sampled from uniform distributions: \( \mu \sim U(0.1, 0.5) \), \( \sigma \sim U(0.1, 0.5) \), \( \lambda \sim U(3, 10) \), \( \nu \sim U(-0.1, 0.1) \), and \( \gamma \sim U(0.5, 1.0) \). These parameter choices ensure the presence of jumps, capturing the stochastic nature of the process.

\noindent\textbf{SafeGraph\&Advan Business Analytics Data.}
The SafeGraph\&Advan business analytics dataset is a proprietary dataset created by integrating data from Advan~\cite{advan} and SafeGraph~\cite{safegraph} to forecast daily customer spending at points of interest (POIs) across Texas, USA. 
Both datasets are licensed through Dewey Data Partners under their proprietary commercial terms, and we comply fully with the terms.
For each POI, the dataset includes time-series data with dynamic features and static attributes. 
Additionally, ego graphs are constructed based on geodesic distances, where each POI serves as a central node connected to its 10 nearest neighbors. An visualization is shown in \cref{fig:ego_graph}.
Specifically, we use POI area, brand name, city name, top and subcategories (based on commercial behavior), and parking lot availability as static features. The dynamic features include spending data, visiting data, weekday, opening hours, and closing hours. These features are constructed for both ego and neighboring nodes.
Based on the top category, we determine the maximum spending in the training data and use it to normalize the input data for both training and evaluation, ensuring a regularized numerical range.
For training stability, we clip the minimum spending value to \( 0.01 \) instead of \( 0 \) to enhance numerical stability for certain methods.

We adopt a sliding window approach with a stride of 1, using the past 14 days as input to predict spending for the next 7 days. The dataset spans multiple time periods: the training set covers January–December 2023, the validation set corresponds to January 2024, and the test set includes February–April 2024. This large-scale dataset consists of approximately 3.9 million sequences for training, 0.33 million for validation, and 0.96 million for testing.

\begin{figure}
    \centering
    \includegraphics[width=0.9\linewidth]{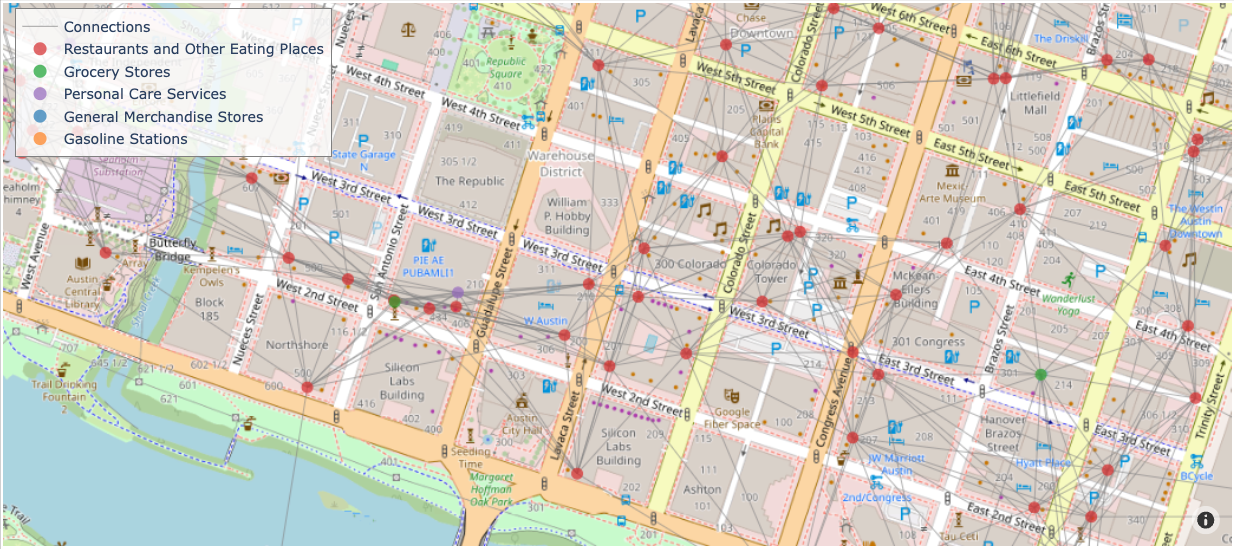}
    \caption{Visualization of Ego Graph Dataset Construction in Austin, Texas}
    \label{fig:ego_graph}
\end{figure}

\noindent\textbf{S\&P 500 Stock Price Data.}
The S\&P 500 dataset~\cite{sp500_kaggle} is a publicly available dataset from Kaggle that provides historical daily stock prices for 500 of the largest publicly traded companies in the U.S (CC0 1.0 Universal license). It primarily consists of time-series data with date information and lacks additional contextual attributes. We include all listed companies and construct a simple fully connected graph among them.
Therefore, for models capable of handling graph data, such as GCN, our implemented denoising models, and \OurModel, we make predictions for all companies (represented as nodes) simultaneously. This differs from the ego-graph processing used in the SafeGraph\&Advan dataset, where predictions are made only for the central node, while neighbor nodes serve purely as contextual information.  
To normalize the data, we determine the maximum stock price for each company in the training data, ensuring that input values fall within the \([0,1]\) range during training.

Following the approach used for the business analytics dataset, we apply a sliding window method with a stride of 1, using the past 14 days as input to predict stock prices for the next 7 days. The dataset is split into training (Jan.–Dec. 2016), validation (Jan. 2017), and testing (Feb.–Apr. 2017) sets. In total, it contains approximately 62K sequences for training, 5K for validation, and 15K for testing.
To better distinguish the effects of different methods on the S\&P 500 dataset, we use a adjusted R$^2$ score $
R^2 = 1 - (1 - R^2_{\text{reg}}) \cdot \frac{n - 1}{n - p - 1}
$, where $n$ is the sample size and we set the number of explanatory variables $p$ to be $(k - 1)(n - 1) / k $, where $k=70.0$.

\subsection{Additional Deterministic Time-Series Baselines (Third-Party Implementations)}
\label{app:third_party_nf}

For completeness, we also report results from third-party implementations of\ Autoformer~\cite{wu2021autoformer}, TiDE~\cite{das2023long}, and N-HiTS~\cite{challu2023nhits}, provided by the \texttt{NeuralForecast} library (NIXTLA). 
Results in the table were produced with the publicly available \texttt{NeuralForecast} package on the same train/validation/test splits and identical data input (\eg exogenous stock ticker information) as our main experiments, using the package’s default training settings without modification.


\begin{table}[htbp]
\centering
\caption{Quantitative results from \texttt{NeuralForecast} (NIXTLA) implementations on the \textbf{S\&P 500} stock dataset with tuned steps.}
\begin{tabular}{l|ccc}
\toprule
Model & MAE $\downarrow$ & MSE $\downarrow$ & $R^2$ $\uparrow$ \\
\midrule
Autoformer & 67.3 & 1.36e04 & 0.307 \\
TiDE & 18.5 & 1.71e03 & 0.922\\
N-BEATS & 15.3 & 1.20e03 & 0.939 \\
N-HiTS  & 14.1 & 9.33e02 & 0.953 \\
TCN & 14.4 & 9.73e02 & 0.951  \\

\bottomrule
\end{tabular}
\label{tab:sp500_nf_appendix}
\end{table}

\subsection{Limitations}
\label{appendix:limitations}

Our approach explicitly models discontinuities (jumps) in the time series. Consequently, if the underlying data lack such jump behaviors—i.e., if they are extremely smooth and exhibit no abrupt shifts—our jump component may be inaccurately estimated or effectively unused. In these scenarios, the model can underperform compared to simpler or purely continuous alternatives that do not rely on capturing sudden changes. For applications where jumps are absent or extremely rare, users should first verify the presence (or likelihood) of discontinuities in their dataset before adopting our framework. Additionally, one potential extension is to design an adaptive mechanism that can automatically deactivate or regularize the jump component when the data do not exhibit significant jump behavior, thereby reducing unnecessary complexity and improving general performance on smooth series.

\subsection{Vanilla Euler Solver}
\label{app:euler_solver}
\begin{algorithm}[H]
\caption{Vanilla Euler-Maruyama Method}
\label{alg:NSMJD_global_euler}
\begin{algorithmic}[1]
\REQUIRE Total solver steps $M$
\STATE $\mC \sim \train_{\text{test}}$, with $\mC = [S_{-T_p:0},C]$
\STATE $\{\mu_\tau, \sigma_\tau, \lambda_\tau, \nu_\tau, \gamma_\tau\}_{\tau=1}^{T_f} \leftarrow f_{\theta}(\mC)$
\STATE $\Delta \leftarrow \frac{T_f}{M}$ \hfill $\rhd$ Solver time-step 
\FOR{$i = 0, \cdots, M-1$}
    \STATE $t_i \leftarrow i\Delta, t_{i+1} \leftarrow (i+1)\Delta, \rho_{t_i} \leftarrow \lfloor t_i \rfloor + 1$
    \STATE $\alpha_i \leftarrow (\mu_{\rho_{t_i}}-\lambda_{\rho_{t_i}}  k_{\rho_{t_i}}- {\sigma_{\rho_{t_i}}^2}/{2})\Delta$ \hfill $\rhd$ Drift 
    \STATE $\beta_i \leftarrow \sigma_{\rho_{t_i}} \sqrt{\Delta} z_1$, with $ z_1\sim \gN(0, 1)$ \hfill $\rhd$ Diffusion 
    \STATE $\zeta_i \leftarrow \kappa \nu_{\rho_{t_i}} + \sqrt{\kappa} \gamma_{\rho_{t_i}} z_2 $
    \newline with $\kappa \sim \poisson(\lambda_{\rho_{t_i}} \Delta), z_2 \sim \gN(0, 1)$ \hfill $\rhd$ Jump 
    \STATE $\ln \bar{S}_{t_{i+1}} \leftarrow \ln \bar{S}_{t_i} + \alpha_i + \beta_i + \zeta_i$
\ENDFOR
\STATE \textbf{return} $\{\bar{S}_{t_i}\}_{i=1}^M$
\end{algorithmic}
\end{algorithm}
We present the standard Euler–Maruyama solver in~\cref{alg:NSMJD_global_euler}, which is used in the ablation study for comparison with our restarted Euler solver.

\section{Impact Statement}
This paper introduces \OurModel, a learning-based time series modeling framework that integrates principled jump-diffusion-based SDE techniques. Our approach effectively captures volatile dynamics, particularly sudden discontinuous jumps that govern temporal data, making it broadly applicable to business analytics, financial modeling, network analysis, and climate simulation. While highly useful for forecasting, we acknowledge potential ethical concerns, including fairness and unintended biases in data or applications. We emphasize responsible deployment and continuous evaluation to mitigate inequalities and risks.

\clearpage
\newpage
\section*{NeurIPS Paper Checklist}

\begin{enumerate}

\item {\bf Claims}
    \item[] Question: Do the main claims made in the abstract and introduction accurately reflect the paper's contributions and scope?
    \item[] Answer: \answerYes{} 
    \item[] Justification: The abstract and introduction accurately reflect our three main contribution, \ie, the time-inhomogeneous Neural MJD model, the ability of tractable learning, and empirical validation on synthetic and real-world datasets. The supporting theoretical derivations are provided in Sections~\ref{sect:background}, ~\ref{sect:model} and appendix (for long proofs), and the empirical results are provided in Section ~\ref{sect:exp}.
    \item[] Guidelines:
    \begin{itemize}
        \item The answer NA means that the abstract and introduction do not include the claims made in the paper.
        \item The abstract and/or introduction should clearly state the claims made, including the contributions made in the paper and important assumptions and limitations. A No or NA answer to this question will not be perceived well by the reviewers. 
        \item The claims made should match theoretical and experimental results, and reflect how much the results can be expected to generalize to other settings. 
        \item It is fine to include aspirational goals as motivation as long as it is clear that these goals are not attained by the paper. 
    \end{itemize}

\item {\bf Limitations}
    \item[] Question: Does the paper discuss the limitations of the work performed by the authors?
    \item[] Answer: \answerYes{} 
    \item[] Justification: The "Limitations" subsection is included in Appendix~\ref{appendix:limitations}. 
    \item[] Guidelines:
    \begin{itemize}
        \item The answer NA means that the paper has no limitation while the answer No means that the paper has limitations, but those are not discussed in the paper. 
        \item The authors are encouraged to create a separate "Limitations" section in their paper.
        \item The paper should point out any strong assumptions and how robust the results are to violations of these assumptions (e.g., independence assumptions, noiseless settings, model well-specification, asymptotic approximations only holding locally). The authors should reflect on how these assumptions might be violated in practice and what the implications would be.
        \item The authors should reflect on the scope of the claims made, e.g., if the approach was only tested on a few datasets or with a few runs. In general, empirical results often depend on implicit assumptions, which should be articulated.
        \item The authors should reflect on the factors that influence the performance of the approach. For example, a facial recognition algorithm may perform poorly when image resolution is low or images are taken in low lighting. Or a speech-to-text system might not be used reliably to provide closed captions for online lectures because it fails to handle technical jargon.
        \item The authors should discuss the computational efficiency of the proposed algorithms and how they scale with dataset size.
        \item If applicable, the authors should discuss possible limitations of their approach to address problems of privacy and fairness.
        \item While the authors might fear that complete honesty about limitations might be used by reviewers as grounds for rejection, a worse outcome might be that reviewers discover limitations that aren't acknowledged in the paper. The authors should use their best judgment and recognize that individual actions in favor of transparency play an important role in developing norms that preserve the integrity of the community. Reviewers will be specifically instructed to not penalize honesty concerning limitations.
    \end{itemize}

\item {\bf Theory assumptions and proofs}
    \item[] Question: For each theoretical result, does the paper provide the full set of assumptions and a complete (and correct) proof?
    \item[] Answer: \answerYes{} 
    \item[] Justification: Every novel theorem provides its full set of assumptions, and complete proofs are provided in the appendix. The theorems drawn from standard textbooks are explicitly cited. 
    \item[] Guidelines:
    \begin{itemize}
        \item The answer NA means that the paper does not include theoretical results. 
        \item All the theorems, formulas, and proofs in the paper should be numbered and cross-referenced.
        \item All assumptions should be clearly stated or referenced in the statement of any theorems.
        \item The proofs can either appear in the main paper or the supplemental material, but if they appear in the supplemental material, the authors are encouraged to provide a short proof sketch to provide intuition. 
        \item Inversely, any informal proof provided in the core of the paper should be complemented by formal proofs provided in appendix or supplemental material.
        \item Theorems and Lemmas that the proof relies upon should be properly referenced. 
    \end{itemize}

    \item {\bf Experimental result reproducibility}
    \item[] Question: Does the paper fully disclose all the information needed to reproduce the main experimental results of the paper to the extent that it affects the main claims and/or conclusions of the paper (regardless of whether the code and data are provided or not)?
    \item[] Answer: \answerYes{} 
    \item[] Justification: We have clearly stated the training (Algorithm \ref{alg:NSMJD_training}) and sampling (Algorithm \ref{alg:NSMJD_restarted_euler}) procedure in algorithm boxes, and the modeling and dataset details are described in both Section \ref{sect:exp} and the appendix. Additionally, we will release our code once the paper is accepted.
    \item[] Guidelines:
    \begin{itemize}
        \item The answer NA means that the paper does not include experiments.
        \item If the paper includes experiments, a No answer to this question will not be perceived well by the reviewers: Making the paper reproducible is important, regardless of whether the code and data are provided or not.
        \item If the contribution is a dataset and/or model, the authors should describe the steps taken to make their results reproducible or verifiable. 
        \item Depending on the contribution, reproducibility can be accomplished in various ways. For example, if the contribution is a novel architecture, describing the architecture fully might suffice, or if the contribution is a specific model and empirical evaluation, it may be necessary to either make it possible for others to replicate the model with the same dataset, or provide access to the model. In general. releasing code and data is often one good way to accomplish this, but reproducibility can also be provided via detailed instructions for how to replicate the results, access to a hosted model (e.g., in the case of a large language model), releasing of a model checkpoint, or other means that are appropriate to the research performed.
        \item While NeurIPS does not require releasing code, the conference does require all submissions to provide some reasonable avenue for reproducibility, which may depend on the nature of the contribution. For example
        \begin{enumerate}
            \item If the contribution is primarily a new algorithm, the paper should make it clear how to reproduce that algorithm.
            \item If the contribution is primarily a new model architecture, the paper should describe the architecture clearly and fully.
            \item If the contribution is a new model (e.g., a large language model), then there should either be a way to access this model for reproducing the results or a way to reproduce the model (e.g., with an open-source dataset or instructions for how to construct the dataset).
            \item We recognize that reproducibility may be tricky in some cases, in which case authors are welcome to describe the particular way they provide for reproducibility. In the case of closed-source models, it may be that access to the model is limited in some way (e.g., to registered users), but it should be possible for other researchers to have some path to reproducing or verifying the results.
        \end{enumerate}
    \end{itemize}

\item {\bf Open access to data and code}
    \item[] Question: Does the paper provide open access to the data and code, with sufficient instructions to faithfully reproduce the main experimental results, as described in supplemental material?
    \item[] Answer: \answerYes{} 
    \item[] Justification: We will release our code once the paper is accepted. For the dataset, the synthetic and S\&P 500 dataset are publicly accessible, while the anonymized and aggregated SafeGraph\&Advan data can be publicly purchased through Dewey platform.
    \item[] Guidelines:
    \begin{itemize}
        \item The answer NA means that paper does not include experiments requiring code.
        \item Please see the NeurIPS code and data submission guidelines (\url{https://nips.cc/public/guides/CodeSubmissionPolicy}) for more details.
        \item While we encourage the release of code and data, we understand that this might not be possible, so “No” is an acceptable answer. Papers cannot be rejected simply for not including code, unless this is central to the contribution (e.g., for a new open-source benchmark).
        \item The instructions should contain the exact command and environment needed to run to reproduce the results. See the NeurIPS code and data submission guidelines (\url{https://nips.cc/public/guides/CodeSubmissionPolicy}) for more details.
        \item The authors should provide instructions on data access and preparation, including how to access the raw data, preprocessed data, intermediate data, and generated data, etc.
        \item The authors should provide scripts to reproduce all experimental results for the new proposed method and baselines. If only a subset of experiments are reproducible, they should state which ones are omitted from the script and why.
        \item At submission time, to preserve anonymity, the authors should release anonymized versions (if applicable).
        \item Providing as much information as possible in supplemental material (appended to the paper) is recommended, but including URLs to data and code is permitted.
    \end{itemize}

\item {\bf Experimental setting/details}
    \item[] Question: Does the paper specify all the training and test details (e.g., data splits, hyperparameters, how they were chosen, type of optimizer, etc.) necessary to understand the results?
    \item[] Answer: \answerYes{} 
    \item[] Justification: We include full procedure for training (Algorithm \ref{alg:NSMJD_training}) and sampling (Algorithm \ref{alg:NSMJD_restarted_euler}),  outlining every step of our procedure. 
    In Section~\ref{sect:exp} and Appendix~\ref{app:exp}, we describe our dataset preprocessing in full—including the data-splitting strategy, normalization steps, and relevant hyperparameter settings—to ensure that our experimental setup can be replicated. 
     Upon acceptance, we will release our source code and reproducible scripts. 
    \item[] Guidelines:
    \begin{itemize}
        \item The answer NA means that the paper does not include experiments.
        \item The experimental setting should be presented in the core of the paper to a level of detail that is necessary to appreciate the results and make sense of them.
        \item The full details can be provided either with the code, in appendix, or as supplemental material.
    \end{itemize}

\item {\bf Experiment statistical significance}
    \item[] Question: Does the paper report error bars suitably and correctly defined or other appropriate information about the statistical significance of the experiments?
    \item[] Answer: \answerYes{} 
    \item[] Justification: Rather than conventional error bars, we report Best-of-$K$ and probabilistic metrics computed over multiple inference runs in Section \ref{sect:exp}, which capture the model’s predictive variability.
    \item[] Guidelines:
    \begin{itemize}
        \item The answer NA means that the paper does not include experiments.
        \item The authors should answer "Yes" if the results are accompanied by error bars, confidence intervals, or statistical significance tests, at least for the experiments that support the main claims of the paper.
        \item The factors of variability that the error bars are capturing should be clearly stated (for example, train/test split, initialization, random drawing of some parameter, or overall run with given experimental conditions).
        \item The method for calculating the error bars should be explained (closed form formula, call to a library function, bootstrap, etc.)
        \item The assumptions made should be given (e.g., Normally distributed errors).
        \item It should be clear whether the error bar is the standard deviation or the standard error of the mean.
        \item It is OK to report 1-sigma error bars, but one should state it. The authors should preferably report a 2-sigma error bar than state that they have a 96\% CI, if the hypothesis of Normality of errors is not verified.
        \item For asymmetric distributions, the authors should be careful not to show in tables or figures symmetric error bars that would yield results that are out of range (e.g. negative error rates).
        \item If error bars are reported in tables or plots, The authors should explain in the text how they were calculated and reference the corresponding figures or tables in the text.
    \end{itemize}

\item {\bf Experiments compute resources}
    \item[] Question: For each experiment, does the paper provide sufficient information on the computer resources (type of compute workers, memory, time of execution) needed to reproduce the experiments?
    \item[] Answer: \answerYes{} 
    \item[] Justification: We include a detailed runtime comparison Tab.~\ref{tab:ablation_runtime}, and the specific GPU types used are listed in the Appendix~\ref{app:exp_model_arch}.
    \item[] Guidelines:
    \begin{itemize}
        \item The answer NA means that the paper does not include experiments.
        \item The paper should indicate the type of compute workers CPU or GPU, internal cluster, or cloud provider, including relevant memory and storage.
        \item The paper should provide the amount of compute required for each of the individual experimental runs as well as estimate the total compute. 
        \item The paper should disclose whether the full research project required more compute than the experiments reported in the paper (e.g., preliminary or failed experiments that didn't make it into the paper). 
    \end{itemize}
    
\item {\bf Code of ethics}
    \item[] Question: Does the research conducted in the paper conform, in every respect, with the NeurIPS Code of Ethics \url{https://neurips.cc/public/EthicsGuidelines}?
    \item[] Answer: \answerYes{} 
    \item[] Justification: Our work fully conforms with the NeurIPS ethics guidelines, using commercially licensed data responsibly, and ensuring no privacy, fairness, or other concerns arise from our work.
    \item[] Guidelines:
    \begin{itemize}
        \item The answer NA means that the authors have not reviewed the NeurIPS Code of Ethics.
        \item If the authors answer No, they should explain the special circumstances that require a deviation from the Code of Ethics.
        \item The authors should make sure to preserve anonymity (e.g., if there is a special consideration due to laws or regulations in their jurisdiction).
    \end{itemize}

\item {\bf Broader impacts}
    \item[] Question: Does the paper discuss both potential positive societal impacts and negative societal impacts of the work performed?
    \item[] Answer: \answerYes{} 
    \item[] Justification: The broader impacts are discussed as a separate section in the appendix.
    \item[] Guidelines:
    \begin{itemize}
        \item The answer NA means that there is no societal impact of the work performed.
        \item If the authors answer NA or No, they should explain why their work has no societal impact or why the paper does not address societal impact.
        \item Examples of negative societal impacts include potential malicious or unintended uses (e.g., disinformation, generating fake profiles, surveillance), fairness considerations (e.g., deployment of technologies that could make decisions that unfairly impact specific groups), privacy considerations, and security considerations.
        \item The conference expects that many papers will be foundational research and not tied to particular applications, let alone deployments. However, if there is a direct path to any negative applications, the authors should point it out. For example, it is legitimate to point out that an improvement in the quality of generative models could be used to generate deepfakes for disinformation. On the other hand, it is not needed to point out that a generic algorithm for optimizing neural networks could enable people to train models that generate Deepfakes faster.
        \item The authors should consider possible harms that could arise when the technology is being used as intended and functioning correctly, harms that could arise when the technology is being used as intended but gives incorrect results, and harms following from (intentional or unintentional) misuse of the technology.
        \item If there are negative societal impacts, the authors could also discuss possible mitigation strategies (e.g., gated release of models, providing defenses in addition to attacks, mechanisms for monitoring misuse, mechanisms to monitor how a system learns from feedback over time, improving the efficiency and accessibility of ML).
    \end{itemize}
    
\item {\bf Safeguards}
    \item[] Question: Does the paper describe safeguards that have been put in place for responsible release of data or models that have a high risk for misuse (e.g., pretrained language models, image generators, or scraped datasets)?
    \item[] Answer: \answerNA{} 
    \item[] Justification: Our work involves standard time-series forecasting on commercially licensed and public anonymized and aggregate datasets, and does not release any high-risk generative models or scraped data, so no special safeguards are required.
    \item[] Guidelines:
    \begin{itemize}
        \item The answer NA means that the paper poses no such risks.
        \item Released models that have a high risk for misuse or dual-use should be released with necessary safeguards to allow for controlled use of the model, for example by requiring that users adhere to usage guidelines or restrictions to access the model or implementing safety filters. 
        \item Datasets that have been scraped from the Internet could pose safety risks. The authors should describe how they avoided releasing unsafe images.
        \item We recognize that providing effective safeguards is challenging, and many papers do not require this, but we encourage authors to take this into account and make a best faith effort.
    \end{itemize}

\item {\bf Licenses for existing assets}
    \item[] Question: Are the creators or original owners of assets (e.g., code, data, models), used in the paper, properly credited and are the license and terms of use explicitly mentioned and properly respected?
    \item[] Answer: \answerYes{} 
    \item[] Justification: Our models are clearly cited in both main text and the appendix, and dataset licenses are provided in the appendix.
    \item[] Guidelines:
    \begin{itemize}
        \item The answer NA means that the paper does not use existing assets.
        \item The authors should cite the original paper that produced the code package or dataset.
        \item The authors should state which version of the asset is used and, if possible, include a URL.
        \item The name of the license (e.g., CC-BY 4.0) should be included for each asset.
        \item For scraped data from a particular source (e.g., website), the copyright and terms of service of that source should be provided.
        \item If assets are released, the license, copyright information, and terms of use in the package should be provided. For popular datasets, \url{paperswithcode.com/datasets} has curated licenses for some datasets. Their licensing guide can help determine the license of a dataset.
        \item For existing datasets that are re-packaged, both the original license and the license of the derived asset (if it has changed) should be provided.
        \item If this information is not available online, the authors are encouraged to reach out to the asset's creators.
    \end{itemize}

\item {\bf New assets}
    \item[] Question: Are new assets introduced in the paper well documented and is the documentation provided alongside the assets?
    \item[] Answer: \answerYes{} 
    \item[] Justification: We will release our code once the paper is accepted, which will be documented. For the dataset, the synthetic and S\&P 500 dataset are publicly accessible, while the anonymized and aggregate SafeGraph\&Advan data can be publicly purchased through Dewey platform (\url{https://www.deweydata.io}).
    \item[] Guidelines:
    \begin{itemize}
        \item The answer NA means that the paper does not release new assets.
        \item Researchers should communicate the details of the dataset/code/model as part of their submissions via structured templates. This includes details about training, license, limitations, etc. 
        \item The paper should discuss whether and how consent was obtained from people whose asset is used.
        \item At submission time, remember to anonymize your assets (if applicable). You can either create an anonymized URL or include an anonymized zip file.
    \end{itemize}

\item {\bf Crowdsourcing and research with human subjects}
    \item[] Question: For crowdsourcing experiments and research with human subjects, does the paper include the full text of instructions given to participants and screenshots, if applicable, as well as details about compensation (if any)? 
    \item[] Answer: \answerNA{} 
    \item[] Justification: This project does not involve any crowdsourcing or human‐subject experiments.
    \item[] Guidelines:
    \begin{itemize}
        \item The answer NA means that the paper does not involve crowdsourcing nor research with human subjects.
        \item Including this information in the supplemental material is fine, but if the main contribution of the paper involves human subjects, then as much detail as possible should be included in the main paper. 
        \item According to the NeurIPS Code of Ethics, workers involved in data collection, curation, or other labor should be paid at least the minimum wage in the country of the data collector. 
    \end{itemize}

\item {\bf Institutional review board (IRB) approvals or equivalent for research with human subjects}
    \item[] Question: Does the paper describe potential risks incurred by study participants, whether such risks were disclosed to the subjects, and whether Institutional Review Board (IRB) approvals (or an equivalent approval/review based on the requirements of your country or institution) were obtained?
    \item[] Answer: \answerNA{} 
    \item[] Justification: This project does not involve any human‐subject experiments.
    \item[] Guidelines:
    \begin{itemize}
        \item The answer NA means that the paper does not involve crowdsourcing nor research with human subjects.
        \item Depending on the country in which research is conducted, IRB approval (or equivalent) may be required for any human subjects research. If you obtained IRB approval, you should clearly state this in the paper. 
        \item We recognize that the procedures for this may vary significantly between institutions and locations, and we expect authors to adhere to the NeurIPS Code of Ethics and the guidelines for their institution. 
        \item For initial submissions, do not include any information that would break anonymity (if applicable), such as the institution conducting the review.
    \end{itemize}

\item {\bf Declaration of LLM usage}
    \item[] Question: Does the paper describe the usage of LLMs if it is an important, original, or non-standard component of the core methods in this research? Note that if the LLM is used only for writing, editing, or formatting purposes and does not impact the core methodology, scientific rigorousness, or originality of the research, declaration is not required.
    \item[] Answer: \answerNA{} 
    \item[] Justification: This project does not involve LLMs in any core method developments.
    \item[] Guidelines:
    \begin{itemize}
        \item The answer NA means that the core method development in this research does not involve LLMs as any important, original, or non-standard components.
        \item Please refer to our LLM policy (\url{https://neurips.cc/Conferences/2025/LLM}) for what should or should not be described.
    \end{itemize}

\end{enumerate}

\end{document}